\def \myarxiv {}
\icmltitlerunning{Composite Functional Gradient Learning of Generative
  Adversarial Models}
\def\<#1,#2>{\langle #1,#2\rangle}
\newtheorem{lemma}{Lemma}[section]
\newtheorem{theorem}{Theorem}[section]
\newtheorem{assumption}{Assumption}[section]
\newcommand\tightpara{\@startsection{paragraph}{4}{\z@}{0.5ex plus
   0ex minus 0ex}{-1em}{\normalsize\bf}}
\newcommand\normalpara{\@startsection{paragraph}{4}{\z@}{1.5ex plus
   0.5ex minus .2ex}{-1em}{\normalsize\bf}}    
\newcommand{\mypara}{\tightpara}
\newcommand{\tightDisplayBegin}[1]{\begingroup \setlength{\belowdisplayskip}{#1} \setlength{\belowdisplayshortskip}{#1} \setlength{\abovedisplayskip}{#1} \setlength{\abovedisplayshortskip}{#1}}
\newcommand{\tightDisplayEnd}{\endgroup}
\begin{document}

\twocolumn[
\icmltitle{Composite Functional Gradient Learning of Generative
  Adversarial Models}

\icmlsetsymbol{equal}{*}

\begin{icmlauthorlist}
\icmlauthor{Rie Johnson}{equal,rj}
\icmlauthor{Tong Zhang}{equal,tz}
\end{icmlauthorlist}

\icmlaffiliation{rj}{RJ Research Consulting, Tarrytown, NY, USA}
\icmlaffiliation{tz}{Tencent AI Lab, Shenzhen, China}

\icmlcorrespondingauthor{Rie Johnson}{riejohnson@gmail.com}

\vskip 0.3in
]

\printAffiliationsAndNotice{\icmlEqualContribution} %

\newcommand{\Real}{{\mathbb{R}}}
\newcommand{\datadim}{r} %
\newcommand{\dataspace}{\Real^\datadim} 
\newcommand{\rvData}{X} %
\newcommand{\rvZ}{Z}    %
\newcommand{\real}{{x^*}}
\newcommand{\realidx}[1]{{x^*_{#1}}}

\newcommand{\calA}{\mathcal{A}}
\newcommand{\idealD}{{\mathcal{D}}}
\newcommand{\Deps}{{D_{\epsilon}}}

\newcommand{\makegen}[1]{{#1}}  %
\newcommand{\makenext}[1]{{{#1}'}}
\newcommand{\gen}{\makegen{x}}
\newcommand{\rvGen}{\makegen{\rvData}} %

\newcommand{\realp}{p_*} 
\newcommand{\genp}{\makegen{p}}

\newcommand{\nextgen}{\makenext{\gen}}
\newcommand{\nextgenp}{\makenext{\genp}}
\newcommand{\rvNextgen}{\makenext{\rvGen}}  %
\newcommand{\logd}{log$d$}

\newcommand{\ganD}{d} 
\newcommand{\D}{\ganD}
\newcommand{\justx}{x}
\newcommand{\any}{x}

\newcommand{\orgAlg}{CFG}
\newcommand{\ourAlg}{ICFG} 
\newcommand{\xourAlg}{xICFG}
\newcommand{\capfont}{\small}
\newcommand{\prm}[1]{\theta_{#1}}
\newcommand{\justG}{G}
\newcommand{\Gidx}[1]{\justG_{#1}}
\newcommand{\gidx}[1]{g_{#1}}
\newcommand{\etaidx}[1]{\eta_{#1}}
\newcommand{\justD}{D}
\newcommand{\Didx}[1]{\justD_{#1}}
\newcommand{\realset}{S_*}
\newcommand{\inpt}{z}
\newcommand{\prior}{p_\inpt}
\newcommand{\xprior}{q_\inpt}
\newcommand{\inptPl}{S_z} %
\newcommand{\inptPlSz}{|\inptPl|}         %
\newcommand{\minib}{b}
\newcommand{\ganG}{G}
\newcommand{\dupd}{U} %
\newcommand{\xG}{\widetilde{G}}  %

\begin{abstract}
This paper first presents a theory for generative adversarial methods 
that does not rely on the traditional minimax formulation.  
It shows that with a strong discriminator, a good generator can be learned so that 
the KL divergence between the distributions of real data and generated data improves 
after each functional gradient step
until it converges to zero. 
Based on the theory, we propose a new stable generative adversarial method. 
A theoretical insight into the original GAN from this new viewpoint is also provided. 
The experiments on image generation show the effectiveness of our new method.  
\end{abstract}

\section{Introduction}
We consider observed real data $\realidx{1},\ldots, \realidx{n} \in \dataspace$ from an unknown
distribution $p_*$ on $\dataspace$.
Moreover, assume that we are given a random variable $\rvZ$ with a known distribution 
such as a Gaussian. %
We are interested in learning a random variable transformation $G(\rvZ)$ so that the
generated data $G(\rvZ)$
has a probability density function that is close to the real
distribution $p_*$.
This is the setting considered in {\em generative adversarial networks} 
(GAN) \cite{GAN14}, and the transformation
$G(\rvZ)$ is often referred to as a {\em generator}. 
While GAN has been widely used, it is also known that GAN is 
difficult to train due to its instability, 
which has led to numerous studies, e.g., 
Wasserstein GAN (WGAN) and its extensions to pursue a different minimax objective
\cite{WGAN17,WGANgp17,WGANfs17},
mode-regularized GAN to tackle the issue of mode collapse \cite{CLJBL17}, 
unrolled GAN \cite{unrollGAN17}, 
AdaGAN \cite{ADAGAN17}, 
MMD GAN \cite{MMDGAN17}, 
and references therein. 

An important concept introduced by GAN is the idea of {\em adversarial
  learner}, denoted here by $\D$, which tries to discriminate 
real data from generated data. Mathematically, GAN solves
the following minimax optimization problem:
\tightDisplayBegin{0.5pt}
\begingroup
\setlength{\thinmuskip}{1mu}
\setlength{\medmuskip}{0mu}
\setlength{\thickmuskip}{0mu}
\begin{equation}
\max_\D \min_G \left[ \sum_{\real\in\text{real data}} \ln \D(\real)
+ \sum_{G(z)\in\text{fake data}} \ln (1- \D(G(z))) 
\right] . \label{eq:gan}
\end{equation}
\endgroup
\tightDisplayEnd
Parameterizing $\D$ and $G$, 
\eqref{eq:gan}
can be viewed as a saddle point problem in optimization, which can be
solved using a stochastic gradient method, where one takes a gradient
step with respect to the parameters in $\D$ and $G$ (see
Algorithm~\ref{alg:gan} below).  
However, the practical procedure, suggested by the original work \cite{GAN14}, 
replaces minimization of $\log(1-\D(G(z)))$ with respect to $G$ in \eqref{eq:gan}
with maximization of $\log(\D(G(z)))$ with respect to $G$, 
called the {\em \logd\ trick}.  
Thus, GAN with the \logd\ trick, though often more effective, can
{\em not directly} 
be explained by the theory based on the minimax formulation \eqref{eq:gan}. 

This paper presents 
a new 
theory
for generative adversarial methods
which does not rely on the minimax formulation \eqref{eq:gan}. 
Our theory shows that one can learn a good generator $G(\rvZ)$ 
where `goodness' is measured by the KL-divergence between the distributions of real data and generated data, 
by
using {\em functional gradient learning} greedily, similar to gradient
boosting \cite{frie:01}.  
However, unlike the standard gradient boosting, which uses
additive models, we consider functional compositions in the
following form
\begin{equation}
G_{t}(\rvZ) = G_{t-1}(\rvZ) + \eta_t g_t(G_{t-1}(\rvZ)),~(t=1,\ldots,T) \label{eq:multi-step}
\end{equation}
to obtain 
$G(\rvZ) = G_{T}(\rvZ)$.
Here $\eta_t$ is a small learning rate, and each $g_t$ is a function to
be estimated from data.  An initial generator $G_0(\rvZ) \in \dataspace$ 
is assumed to be given.  
We learn 
from data 
$g_t$ greedily from $t=1$ to $t=T$
so that improvement (in terms of the KL-divergence) is guaranteed.  

Our theory leads to a new stable generative adversarial method.  
It also provides a new theoretical insight into the original GAN both with and without the \logd\ trick. 
The experiments show the effectiveness of our new method 
on image generation in comparison with GAN variants.  

\mypara{Notation} 
Throughout the paper, 
we use $\any$ to denote data in $\dataspace$, and in particular, we use 
$\real$ to denote real data.  
The probability density function of real data is denoted by $p_*$.  
We use $\|\cdot\|$ to denote the vector 2-norm and 
$\nabla h(\any)$ to denote the gradient w.r.t. $\any$
of a scalar function $h(\any)$. 
\section{Theory}

\begin{algorithm*}[ht]
  \caption{\orgAlg: Composite Functional Gradient Learning of GAN} 
  \label{alg:fg}
\begin{algorithmic}[1]
\REQUIRE real data $\realidx{1},\ldots,\realidx{n}$, 
         initial generator $G_0(z)$ with 
         generated data $\{G_0(z_1),\ldots, G_0(z_m)\}$.  
         Meta-parameter: $T$. 
\FOR {$t=1,2,\ldots,T$}
\STATE $D_{t}(x) \leftarrow \arg\min_D \left[ 
       \frac{1}{n}\sum_{i=1}^n \ln(1+\exp(-D(\realidx{i}))) + \frac{1}{m}\sum_{i=1}^m \ln(1+\exp(D(G_{t-1}(z_i))))
       \right]$
       \label{lno:cfgopt}
\STATE $g_{t}(x) \leftarrow s_t(x) \nabla D_{t}(x)$ $~~~$($s_t(x)$ is for scaling, e.g., most simply $s_t(x)=1$) \label{line:cfg}
\STATE  $G_{t}(z) \leftarrow  G_{t-1}(z) + \eta_{t} g_{t}(G_{t-1}(z))$, for some $\eta_{t}>0$.
\ENDFOR
\RETURN generator $G_{T}(z)$
\end{algorithmic}
\end{algorithm*}

To present our theory, 
we start with stating assumptions.  %
We then analyze one step of random variable transformation in \eqref{eq:multi-step} 
(i.e., transforming $G_{t-1}(\rvZ)$ to $G_{t}(\rvZ)$) 
and examine an algorithm suggested by this analysis. %

\subsection{Assumptions and definitions}
\label{sec:assump}

\newcommand{\localD}{D'}
\newcommand{\genset}{S}

\mypara{A strong discriminator} 
Given a set $\realset$ of real data and a set $\genset$ of generated data, 
assume that we can obtain a strong {\em discriminator} $D$ 
using logistic regression so that 
$
D \approx 
$
\tightDisplayBegin{0.5pt}
\begingroup
\setlength{\thinmuskip}{0mu}
\setlength{\medmuskip}{0mu}
\setlength{\thickmuskip}{0mu}
\begin{align*}
  \arg\min_{\localD} \left[  \frac{1}{|\realset|}\sum_{\justx \in \realset} \ln (1 + e^{-\localD(\justx)} ) + 
                             \frac{1}{|\genset|} \sum_{\justx \in \genset}  \ln (1 + e^{ \localD(\justx)} ) \right]~. 
\end{align*}
\endgroup
\tightDisplayEnd
$D$ tries to discriminate the real data from the generated data. 
Here we use the logistic model, and so $\D$ in \eqref{eq:gan} corresponds to
$\frac{1}{1+\exp(-D(x))}$. 
Define a quantity $\idealD(\any)$ by 
\tightDisplayBegin{3pt}
\[
\idealD(\any) :=\ln \frac{ p_*(\any) }{ \genp(\any) }
\] 
\tightDisplayEnd
where $p_*$ and $p$ are the probability density functions of real data and generated data, 
respectively, 
and assume that $|\idealD(\any)|$$<$$\infty$.  
$p_*$ and $p$ are thus assumed to be nonzero everywhere. 
When the number of given examples is sufficiently large,  
the standard statistical consistency theory of logistic regression 
(see e.g., \cite{Z04}) 
implies that 
$
D(x) \approx \idealD(\any)~. %
$
Therefore, 
assume that 
the following $\epsilon$-approximation condition is satisfied
for a small $\epsilon>0$: 
\tightDisplayBegin{5pt}
\begin{align*}
\int &q_*(\any) \left(\left|D(\any)-\idealD(\any)\right|+\left|e^{D(\any)}-e^{\idealD(\any)}\right|\right) d \any \le \epsilon, \\[0pt]
     &q_*(\any)=p_*(\any) \max(1,\|\nabla \ln p_*(\any)\|)~. \\
\end{align*}
\tightDisplayEnd
Note that the assumption of the {\em optimal} discriminator has been commonly used, 
and we slightly relax it to a {\em strong} discriminator by quantifying the deviation 
from the optimum by $\epsilon$.  
%
%
%
\mypara{Smooth and light-tailed $p_*$} 
Assume that $p_*$, the density of real data, is smooth with light tails; 
we use a constant $h_0>0$ that depends on the shape of $p_*$.  
Due to the space limit, the precise statements are deferred to the Appendix.
Common exponential distributions such as
Gaussian distributions and mixtures of Gaussians all satisfy the
assumption, and an arbitrary distribution can always be approximated 
to an arbitrary precision 
by a mixture of Gaussians.  

\subsection{Analyzing one step of random variable transformation}
\label{sec:onestep}
The goal is to approximate the true density $p_*$ on $\dataspace$ 
through \eqref{eq:multi-step}.  
Our analysis here focuses on one step of \eqref{eq:multi-step} at time $t$, namely, 
random variable transformation of $G_{t-1}(\rvZ)$ to $G_{t}(\rvZ)$.  
To simplify notation, we assume that 
we are given a random variable $\rvGen$ with a probability density 
$\genp$ on $\dataspace$. We are interested in finding a function
$g: \dataspace \to \dataspace$, so that the 
transformed variable $\rvNextgen=  \rvGen + \eta g(\rvGen)$
for small $\eta>0$
has a distribution closer to $p_*$. We show
that this can be achieved with 
a gradient-like step in the function space. 

\newcommand{\ourKL}{L}
To measure the distance of a density $p$ from the true density $p_*$, we will keep track
of the KL-divergence %
\tightDisplayBegin{5pt}
\begin{equation}
\ourKL(p) = \int p_*(x) \ln \frac{p_*(x)}{p(x)} d x  \label{eq:close}
\end{equation}
\tightDisplayEnd
before and after the transformation.
We know that $\ourKL(p) \geq 0$ for all $p$, and $\ourKL(p)=0$ if and only if
$p=p_*$. 

The following theorem consequently shows that with an appropriately chosen $g(\cdot)$,
the transformation $\rvGen \to \rvGen + \eta g(\rvGen)$ can always reduce
the KL-divergence $\ourKL(\cdot)$. This means that transformation $\rvGen + \eta
g(\rvGen)$ is an improvement from $\rvGen$.
The proof is 
given in the Appendix. 
\begin{theorem}
Under the assumptions in Section \ref{sec:assump}, 
let $g: \dataspace \to \dataspace$ be a  continuously
differentiable transformation such that $\|g(\cdot)\| \leq a$ and 
$\|\nabla g(\cdot)\| \leq b$.
Let $\genp$ be the probability density of a random variable $\rvGen$, and 
let $\nextgenp$ be the probability density of the random variable $\rvNextgen$ 
such that $\rvNextgen=\rvGen+\eta g(\rvGen)$ where  $0 < \eta < \min(1/b,h_0/a)$.
Then there exists a
positive constant $c$ such that for all $\epsilon>0$:
\tightDisplayBegin{0.5pt}
\[
\ourKL(\nextgenp) \leq \ourKL(\genp) 
- \eta \int p_*(\any) \; g(\any)^\top\nabla D(\any) \; d \any
+ c \eta^2  + c \eta \epsilon . 
\]
\tightDisplayEnd
\label{thm:grad}
\end{theorem}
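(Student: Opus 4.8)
The plan is to track how the divergence changes under the pushforward of $p$ by the map $\phi(x) = x + \eta g(x)$. Since the $\int p_*\ln p_*$ term is common to $L(p')$ and $L(p)$, I would first write
\[
L(p') - L(p) = \int p_*(x)\bigl(\ln p(x) - \ln p'(x)\bigr)\,dx .
\]
Because $\|\nabla g(\cdot)\| \le b$ and $\eta < 1/b$, the Jacobian $I + \eta\nabla g(x)$ is invertible with positive determinant, so $\phi$ is a $C^1$ diffeomorphism and the change-of-variables formula gives $p'(\phi(x)) = p(x)/\det(I+\eta\nabla g(x))$. Substituting $y=\phi(x)$ in $\int p_*\ln p'\,dx$ converts the right-hand side into $\int p_*\ln p\,dx$ minus a single integral over $x$ with integrand $p_*(x+\eta g(x))\,[\ln p(x) - \ln\det(I+\eta\nabla g(x))]\,\det(I+\eta\nabla g(x))$.

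Next I would expand in $\eta$, using $\ln\det(I+\eta\nabla g) = \eta\,(\nabla\cdot g) + O(\eta^2)$, $\det(I+\eta\nabla g) = 1 + \eta\,(\nabla\cdot g) + O(\eta^2)$, and $p_*(x+\eta g) = p_* + \eta\,\nabla p_*^\top g + O(\eta^2)$. The zeroth-order terms cancel the $\int p_*\ln p$ piece exactly, leaving a first-order term. Integrating the divergence contribution by parts (the boundary terms vanishing by the light-tail assumption on $p_*$) and collecting the $\nabla p_*$ pieces, this simplifies, after using $\nabla p_* = p_*\nabla\ln p_*$, to $-\eta\int p_*(x)\,g(x)^\top\nabla\idealD(x)\,dx$, where $\nabla\idealD = \nabla\ln p_* - \nabla\ln p$ is precisely the gradient of the ideal discriminator.

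To replace $\nabla\idealD$ by $\nabla D$ as in the statement, I would write the gap as $-\eta\int p_*\,g^\top\nabla(D-\idealD)\,dx$ and integrate by parts once more to move the gradient off $D-\idealD$, obtaining $\eta\int (D-\idealD)\bigl(\nabla p_*^\top g + p_*\,(\nabla\cdot g)\bigr)\,dx$. Using $\|g\|\le a$, $|\nabla\cdot g|\le \sqrt{r}\,b$, and the facts $\|\nabla p_*\| = p_*\|\nabla\ln p_*\|\le q_*$ and $p_*\le q_*$, this is bounded in absolute value by $(a+\sqrt{r}\,b)\int q_*\,|D-\idealD|\,dx \le (a+\sqrt{r}\,b)\,\epsilon$ by the $\epsilon$-approximation condition, which is exactly the $c\eta\epsilon$ term. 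Note that moving the derivative in this step is what lets me use $|D-\idealD|$ rather than $|\nabla D-\nabla\idealD|$, matching the hypotheses; the companion bound on $|e^{D}-e^{\idealD}|$ enters when the same substitution $\idealD\to D$ is performed inside terms carrying the density ratio $e^{-\idealD}=p/p_*$.

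The main obstacle is the last step: converting the formal Taylor expansions into rigorous, uniformly integrable $O(\eta^2)$ remainder estimates and verifying that every integration-by-parts boundary term genuinely vanishes. This is exactly where the (deferred) smoothness and light-tail assumptions on $p_*$ are needed, together with the restriction $\eta < h_0/a$, which guarantees $\|\eta g\|< h_0$ so that the displaced argument $x+\eta g(x)$ stays in the regime where the shape bounds on $p_*$ apply; collecting these uniform bounds yields a single constant $c$ and the $c\eta^2$ term, completing the claimed inequality.
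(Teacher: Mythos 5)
Your proposal is correct and follows essentially the same route as the paper's proof: a change of variables through the diffeomorphism $x \mapsto x + \eta g(x)$, first-order expansion of the Jacobian determinant and of the displaced density $p_*$, integration by parts to arrive at $-\eta\int p_*\, g^\top\nabla\idealD\, dx$, and the $\epsilon$-approximation condition (with the derivative moved off $D-\idealD$ by a further integration by parts) to pass to $\nabla D$. The only difference is organizational: the paper proves a more general $\beta$-weighted statement for $L_\beta$ and specializes to $\beta=1$, packaging your Taylor-remainder estimates into standalone lemmas under its smoothness/light-tail assumption, while you carry out the $\beta=1$ computation directly.
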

The consequences of the theorem become clear when we choose 
$g(\any)= s(\any) \nabla D(\any)$ (where $s(\any)>0$ is an arbitrary scaling factor).  
By doing so and letting $\epsilon=\eta$, we have: 
\begin{equation}
\ourKL(\nextgenp) \leq \ourKL(\genp) 
- \eta \int p_*(\any)  s(\any) \; \|\nabla D(\any)\|_2^2  \; d \any
+ O(\eta^2) . 
\label{eq:onestep}
\end{equation} 
This means that by letting $g(\any)= s(\any) \nabla D(\any)$, 
the objective value $\ourKL(\cdot)$ will be reduced for a
sufficiently small $\eta$ 
unless %
$\int p_*(\any)  s(\any) \; \|\nabla D(\any)\|_2^2  \; d\any$ vanishes.  
The vanishing condition implies that $D(\any)$ is approximately a
constant when 
$p_*$ has full support on $\dataspace$. 
In this case, the discriminator is unable to differentiate
the real data from the generated data.
Thus, it is implied that  letting $g(\any)=s(\any) \nabla D(\any)$ makes 
the probability density of generated data closer to that of real data 
until the discriminator becomes unable to distinguish the real data and generated data. 

We note that taking $g(\any)= s(\any) \nabla
D(\any)$ is analogous to a gradient descent step of 
$\ourKL(p)$ in the
function space so that a step is taken to modify the function instead of the model parameters.  
Therefore, our theory leads to a
functional gradient view of variable transformation that {\em can always
improve the quality of the generator} --- when the quality is measured by the KL-divergence between
the true data and the generated data.   

If we repeat the process described above, 
Algorithm \ref{alg:fg} is obtained.  
We call it {\em composite functional gradient learning of
  GAN (CFG-GAN)}, or simply {\em CFG}.
\orgAlg\ forms %
$g_{t}$ by directly using the functional gradient $\nabla D_{t}(x)$, as suggested by 
our theory. 
If \eqref{eq:onestep} holds, and if we choose 
$\eta_t$$=$$\eta$$=$$\epsilon$$=$$O(1/\sqrt{T})$, then
by cascading \eqref{eq:onestep} from $t$$=$$1$ to $T$ 
we obtain the following bound: 
$\frac{1}{T}\sum_{t=1}^{T} \int p_*(x) s_t(x) \|\nabla D_{t}(x)\|^2 d x$
$=$
$\frac{1}{T} \left( O(T \eta) + \eta^{-1} \; \ourKL(p_0)\right)$ $=$ $O(T^{-1/2})$, 
where $p_0$ is the density of $G_0(z)$.
This means that 
as $t$ increases, $\nabla D_{t}(x) \to 0$ and thus $D_{t}(x)$ approaches 
a constant, assuming 
$p_*$ has full support on $\dataspace$. 
That is, in the limit, the discriminator is unable to differentiate the real
data from the generated data.

We note that \cite{LJT17} describes a related but different cascading process 
motivated by Langevin dynamics sampling.  
The Langevin theory requires repeated noise addition in the generation process. 
Our generation is simpler as there is no need for noise addition. 

\section{Composite Functional Gradient Algorithms}
Starting from the \orgAlg\ algorithm above, we empirically explored algorithms 
on image generation.  
In this section we parameterize $\justD$ and denote the model parameters by $\prm{\justD}$.  
\begin{figure}[t]
\centering
\begin{center}
\includegraphics[width=2.7in]{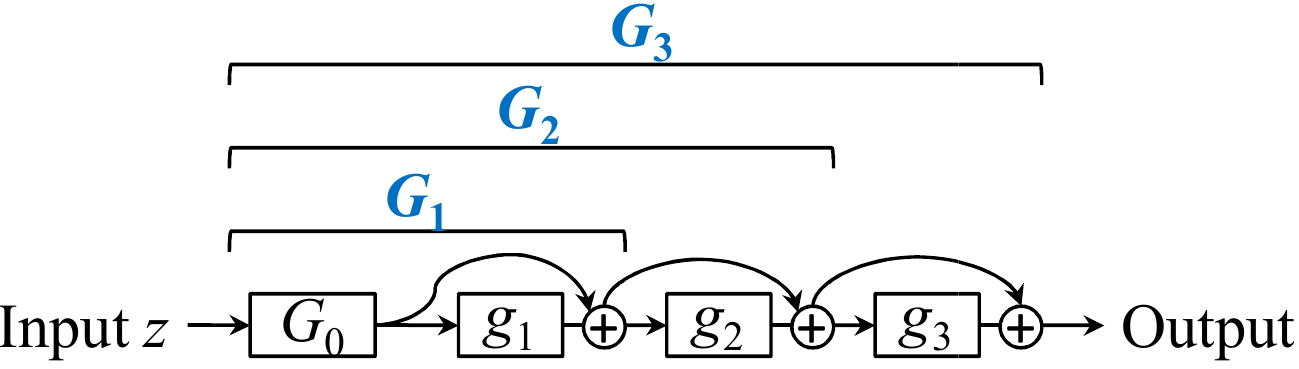} %
\vskip -0.05in
\caption{\label{fig:g} \capfont
  Generator network automatically derived by \orgAlg\ or \ourAlg\ 
  at $t=3$. 
  `$\oplus$' indicates addition. 
}
\end{center}
\end{figure}
\begin{algorithm}[t]
  \caption{\ourAlg: Incremental CFG}  
  \label{alg:ourAlg}
\begin{algorithmic}[1]
\REQUIRE  
  a set of training examples $\realset$, 
  prior $\prior$, 
  initial generator $\Gidx{0}$, 
  discriminator $\justD$. %
  Meta-parameters: $T$, 
                   mini-batch size $\minib$, 
                   discriminator update frequency $\dupd$. 
\FOR {$t=1,2,\ldots,T$}
  \FOR {$\dupd$ steps}
    \STATE Sample $\realidx{1},\ldots,\realidx{\minib}$ from $\realset$. 
    \STATE Sample $\inpt_1,\ldots,\inpt_{\minib}$ according to $\prior$.  
    \STATE Update $\justD$ by descending the stochastic gradient: 
    \begingroup
    \setlength{\medmuskip}{0mu}
    \begin{small}
    $\nabla_{\prm{\justD}} \frac{1}{\minib} \sum_{i=1}^\minib \left[ \ln(1+e^{-\justD(\realidx{i})}) + \ln(1+e^{\justD(\Gidx{t-1}(\inpt_i))}) \right]$

    \end{small}
    \endgroup
    \label{line:Dobj}   
  \ENDFOR
  \STATE $\gidx{t}(x) \leftarrow s_t(x) \nabla \justD(x)$ $~~~$ (e.g., $s_t(x)=1$)
  \STATE $\Gidx{t}(\inpt) \leftarrow  \Gidx{t-1}(\inpt) + 
                                \etaidx{t} \gidx{t}(\Gidx{t-1}(\inpt))$, for $\etaidx{t}>0$. \label{line:g2}
\ENDFOR
\RETURN generator $\Gidx{T}$ 
(and the updated $\justD$ if so desired). 
\end{algorithmic}
\end{algorithm}
\subsection{\ourAlg: Incremental CFG}

The first change made to \orgAlg\ is to make it incremental similar to GAN, 
resulting in {\em incremental CFG} (\ourAlg, Algorithm \ref{alg:ourAlg}). 
\orgAlg\ (Algorithm \ref{alg:fg}) optimizes the discriminator to convergence in every iteration. 
On image generation, 
this tends to cause the discriminator to overfit, 
which turned out to be much more harmful than underfitting 
as an overfit discriminator grossly violates the $\epsilon$-approximation condition 
(Section \ref{sec:assump}) 
thus grossly pushing the generator in a wrong direction.   
Also, updating the discriminator to convergence is computationally impractical 
apart from whether or not doing so is helpful/harmful.
\ourAlg\ incrementally updates a discriminator little by little interleaved with the generator updates, 
so that the generator can keep providing new and more challenging examples to 
prevent the discriminator from overfitting.  
Note that here we broadly use the term ``overfit'' for failure to generalize to unseen data 
(after fitting to observed data).  
This includes cases on unseen data in the {\em low-density region} 
according to our assumption, which is {\em outside of manifolds} according to 
the view of disjoint low-dim data manifolds 
of \cite{AB17}, 
and so 
our observation is in line with \cite{AB17} in that 
the shape of distributions could be problematic. 
However, we handle the issue differently.  
Instead of changing loss (leading to WGAN), 
we deal with it by {\em early stopping} of discriminator training 
(like GAN) 
and functional gradient learning in generator update
(unlike GAN).  
The latter ensures improvement of generator so that it keeps challenging
the discriminator, and we will later revisit this point. 

\ourAlg\ shares nice properties with \orgAlg. %
The generator is guaranteed to improve with each update 
to the extent that the assumptions hold; therefore, it is expected to be stable. 
There is no need to design a complex generator model. 
The generator model is automatically and implicitly 
derived from the discriminator
and %
{\em grows} as training proceeds (see Figure \ref{fig:g}).  
A shortcoming, however, is that the implicit generator network can become very large. 
At time $t$, %
computation of $\Gidx{t}(\inpt)$ starting from scratch is in $O(t)$; therefore, 
performing $T$ iterations of training could be in $O(\sum_{t=1}^T t) = O(T^2)$.
We found that image generation requires a 
large $T$ 
to the extent that it is computationally problematic,  
causing slow training and slow generation. 

\cite{NS18} recently proposed a related method for fine-tuning WGAN based on different motivations. 
We note that their method would also suffer from the same issue if used for image generation from scratch.

A partial remedy, which speeds up training (but not generation), is to have an {\em input pool} of a fixed size. 
That is, 
we restrict the input distribution to be on a finite set $\inptPl$ 
and for every input $\inpt \in \inptPl$, maintain $\Gidx{t'}(\inpt)$ for the latest $t'$ for which $\Gidx{t'}(\inpt)$ was computed.  
By doing so, 
when $\Gidx{t}(\inpt)$ needs to be computed, one can start from 
$\Gidx{t'}(\inpt)$ instead of starting over from $\Gidx{0}(\inpt)$, which saves computation time. 
However, this remedy solves only a part of the problem. 

\begin{algorithm}[t]
  \caption{\xourAlg: Approximate ICFG}  
  \label{alg:xourAlg}
\begin{algorithmic}[1]
\REQUIRE
a set of training examples $\realset$, prior $\prior$, 
approximator $\xG$ at its initial state, 
discriminator $\justD$. \\
Meta-parameters: input pool size, $(T,\minib,\dupd)$ for \ourAlg. 
\LOOP
  \STATE $\inptPl \leftarrow$ an input pool sampled according to $\prior$. \label{lno:pl-one}
  \STATE $\xprior \leftarrow$ the uniform distribution over $\inptPl$. \label{lno:pl-two} 
  \STATE $\justG$,$\justD \leftarrow$ output of \ourAlg\ using $\realset,\xprior,\xG,\justD$ as input. 
  \STATE {\bf if} exit criteria are met {\bf then} {\bf return} generator $\justG$ {\bf fi} \label{xourAlg:eval}
  \STATE Update $\xG$ to minimize $\sum_{\inpt \in \inptPl} \frac{1}{2}\|\xG(\inpt)-\justG(\inpt)\|^2$ \label{lno:xgupd}
\ENDLOOP
\end{algorithmic}
\end{algorithm}
\subsection{\xourAlg: Approximate incremental CFG} 

As a complete solution to the issue of large generators, we propose  
{\em Approximate \ourAlg} ({\em \xourAlg}, Algorithm \ref{alg:xourAlg}). 
\xourAlg\ periodically compresses the generator obtained by \ourAlg, by training 
an {\em approximator} of a fixed size that approximates the behavior of the generator obtained by \ourAlg.  
That is, 
given a definition of an approximator $\xG$ and its initial state, 
\xourAlg\ repeatedly alternates the following.  
\begin{itemize}
\setlength\itemsep{0em}
\item Using the approximator $\xG$ as the initial generator, %
perform $T$ iterations of \ourAlg\ to obtain generator $\justG$.  %
\item Update the approximator $\xG$ to achieve $\xG(\inpt) \approx \justG(\inpt)$. 
\end{itemize} 

The generator size is again in $O(T)$, but unlike \ourAlg, 
$T$ for \xourAlg\ can be small (e.g., $T=10$), 
thus \xourAlg\ is efficient. 
We use the idea of an input pool above for speeding up training, and instead of keeping the same pool to the end, we refresh the pool 
$\inptPl$ in every iteration of \xourAlg\ (Lines \ref{lno:pl-one}\&\ref{lno:pl-two} of Algorithm \ref{alg:xourAlg}).  
For speed, the values $\Gidx{t'}(\inpt)$ for $\inpt \in \inptPl$ for the latest $t'$ should be kept 
not only for use in \ourAlg\ but also  
for preparing the training data $\{ (\inpt,\justG(\inpt))~|~\inpt \in \inptPl \}$ 
for the training of the approximator $\xG$ (Line \ref{lno:xgupd}). 

A small pool size $\inptPlSz$ and a small $T$ would reduce the runtime of one iteration, 
but they would increase the number of required iterations, as they reduce the amount of 
the improvement achieved by one iteration of \xourAlg, and so a trade-off should be found empirically. 
In particular, 
approximation typically causes some degradation, and so it is important to set $T$ and $\inptPlSz$ to sufficiently large values 
so that the amount of the generator improvement exceeds the amount of degradation caused by approximation. 
In our experiments, however, tuning of meta-parameters turned out to be relatively easy; essentially
one set of meta-parameters achieved stable training in all the tested settings across datasets and 
network architectures, as described later. 

\begin{algorithm}[b]
\caption{GAN \cite{GAN14}}
  \label{alg:gan}
\begin{algorithmic}[1]
\REQUIRE $\realset$, $\prior$, discriminator $\ganD$, $\ganG$. 
         Meta-parameters $\minib,\dupd$.  
\REPEAT
  \FOR {$\dupd$ steps}
    \STATE Sample $\realidx{1},\ldots,\realidx{\minib}$ from $\realset$. 
    \STATE Sample $\inpt_1,\ldots,\inpt_\minib$ according to $\prior$.  
    \STATE Update $\ganD$ by ascending the stochastic gradient: \label{line:ganDobj} 
    \begin{center}
    $\nabla_{\prm{\ganD}} \frac{1}{\minib} \sum_{i=1}^{\minib} \left[ \ln \ganD(\realidx{i}) + \ln(1-\ganD(\ganG(\inpt_i))) \right]$ 
    \end{center}
  \ENDFOR
  \STATE Sample $\inpt_1,\ldots,\inpt_{\minib}$ according to $\prior$.  
  \STATE Update $\ganG$ by descending the stochastic gradient: 
    \begin{center}
    $\nabla_{\prm{\ganG}} \frac{1}{\minib} \sum_{i=1}^{\minib} \ln(1-\ganD(\ganG(\inpt_i)))$ \label{line:ganGupd}
    \end{center}
\UNTIL{exit criteria are met}
\RETURN generator $\ganG$ 
\end{algorithmic}
\end{algorithm}
\subsection{Relation to GAN}
\label{sec:gan}

We show that GAN (Algorithm \ref{alg:gan}) with 
the logistic model (as is typically done) 
is closely related to 
a special case of \xourAlg\ 
that uses an extreme setting. 
This viewpoint leads to a new insight into GAN's instability. 

We start with the fact that 
GAN with the logistic model (and so $\ganD(x)=\frac{1}{1+\exp(-\justD(x))}$) 
and \ourAlg\ share the discriminator update procedure as both 
minimize the logistic loss.  
This fact becomes more obvious when we plug $\ganD(x)=\frac{1}{1+\exp(-\justD(x))}$ into 
Line \ref{line:ganDobj} of Algorithm \ref{alg:gan}. 

Next, we show that 
{\em the generator update of GAN is equivalent to coarsely approximating a generator produced by \ourAlg\ with $T$=$1$}. 
First note that GAN's generator update (Line \ref{line:ganGupd} of Algorithm \ref{alg:gan}) 
requires the gradient $\nabla_{\prm{\ganG}} \ln(1-\ganD(\ganG(z)))$.  
Using $\ganD(x)=\frac{1}{1+\exp(-\justD(x))}$ again, and writing $[v]_i$ for the $i$-th component of vector $v$, 
the $k$-th component of this gradient can be written 
as: 
\newcommand{\newG}{G'}
\newcommand{\sgan}{s_0}
\newcommand{\slogd}{s_1}
\tightDisplayBegin{0.5pt}
\begingroup
\begin{align}
 & \textstyle
    \left[ \nabla_{\prm{\ganG}} \ln(1-\ganD(\ganG(\inpt))) \right]_k
  = \left[ \nabla_{\prm{\ganG}} \ln\frac{\exp(-\justD(\ganG(\inpt)))}{1+\exp(-\justD(\ganG(\inpt)))} \right]_k \nonumber \\[-0.05in]
 & ~~~~~~~~~~~~~
  \textstyle
  = -\sgan(\ganG(\inpt)) \sum_j \left[ \nabla \justD(\ganG(\inpt)) \right]_j \frac{\partial [\ganG(\inpt)]_j}{\partial [\prm{\ganG}]_k}
  \label{eqn:ganupd}
\end{align}
\endgroup
\tightDisplayEnd
where 
$
\sgan(\any)=\frac{1}{1+\exp(-\justD(\any))}
$, 
resulting from differentiating $f(y)=-\ln \frac{\exp(-y)}{1+\exp(-y)}$ at $y=\justD(\any)$. 
Now suppose that we apply \ourAlg\ with $T=1$ to a generator $\ganG$  
to obtain a new generator %
$
\newG(\inpt) = \ganG(\inpt) + \eta g(\ganG(\inpt)) = \ganG(\inpt) + \eta s(\ganG(\inpt)) \nabla \justD(\ganG(\inpt)),
$
%
and then 
we update $\ganG$ to approximate $\newG$ so that 
$
\sum_\inpt \frac{1}{2} \left\| \newG(\inpt) - \ganG(\inpt) \right\|^2
$
is minimized as in Line \ref{lno:xgupd} of \xourAlg.  
To take one step of gradient descent for this approximation, we need 
the gradient 
$\nabla_{\prm{\ganG}} \frac{1}{2} \left\| \newG(\inpt) - \ganG(\inpt) \right\|^2$, 
and its $k$-th component is
{\small
$
-\sum_j \left[ \newG(\inpt) - \ganG(\inpt) \right]_j \frac{\partial [\ganG(\inpt)]_j}{\partial [\prm{\ganG}]_k} 
= -\eta s(\ganG(\inpt)) \sum_j \left[ \nabla \justD(\ganG(\inpt)) \right]_j \frac{\partial [\ganG(\inpt)]_j}{\partial [\prm{\ganG}]_k}~.
$}
By setting the scaling factor $s(\any)=\sgan(\any)/\eta$, 
this is exactly the same as (\ref{eqn:ganupd}), required for the GAN generator update. 
(Recall that our theory and algorithms accommodate an arbitrary data-dependent scaling factor $s(\any)>0$.)

Thus, algorithmically GAN 
is closely related to 
a special case of \xourAlg\ 
that does the following: 
\begin{itemize}%
\setlength\itemsep{0em}
\item Set $T$=1
so that \ourAlg\ updates the generator {\em just once}. 
\item To update the approximator, take {\em only one} gradient descent step with {\em only one} mini-batch, 
      instead of optimizing to the convergence with many examples.   
      Therefore, the degree of approximation could be poor.  
\end{itemize}
The same argument applies also to the \logd-trick variant of GAN 
by replacing $\sgan(\any)=\frac{1}{1+\exp(-\justD(\any))}$ with 
$\slogd(\any)=\frac{1}{1+\exp(\justD(\any))}$. 
When 
generated data $\gen$ is very far from the real data and so
$\justD(\gen) \ll 0$, 
we have $\sgan(\gen) \approx 0$ (without the \logd\ trick), 
which would make the gradient (required for updating the GAN generator)
vanish, as noted in \cite{GAN14}, even though the generator is poor and so requires updating.  
In contrast, we have $\slogd(\gen) \approx 1$ (with the \logd\ trick)  
in this poor generator situation, 
which 
is more sensible as well as more similar to our choice ($s(\any)=1$) 
for the \xourAlg\ experiments. %

\mypara{Why is GAN unstable?}
In spite of their connection, 
GAN is unstable, and \xourAlg\ with appropriate meta-parameters is stable (shown later).  
Thus, we figure that GAN's instability derives from what is unique to GAN, the two bullets above
-- {\em an extremely small} $T$ and {\em coarse approximation}. 
Either can cause degradation of the generator, 
leading to instability.

We have contrasted GAN's generator update with \xourAlg's {\em approximator} update. 
Now we compare it with \ourAlg's {\em generator} update to consider 
the algorithmic merits of our functional gradient approach.  
The short-term goal of generator update 
can be regarded as the increase of 
the discriminator output on generated data, 
i.e., to have $D(G_{t+1}(\inpt)) > D(G_t(\inpt))$ for any $\inpt$$\sim$$\prior$.  
\ourAlg\ updates the generator by 
$G_{t+1}(\inpt)=G_t(\inpt)+\eta \nabla D(G_t(\inpt))$, 
and so with small $\eta$, 
$D(G(\inpt))$ is guaranteed %
to increase for any $\inpt$.  
This is because by definition $\nabla D(G_t(\inpt))$ is the direction that increases
the discriminator output for $\inpt$, 
and it is {\em precisely} obtained on the fly for {\em every} $\inpt$ 
at the time of generation. 
By contrast, GAN {\em stochastically} 
and {\em approximately}
updates $\prm{G}$ using a {\em small sample} 
(one mini-batch SGD step backpropagating $\nabla \justD$), 
and so GAN's update can be noisy, 
which can lead to instability through generator degradation.  
\section{Experiments}
\newcommand{\gano}{GAN0}
\newcommand{\gand}{GAN1}
\newcommand{\wgangp}{WGANgp}
\newcommand{\trdiff}{$|D(\mbox{real})$$-$$D(\mbox{gen})|$}
\newcommand{\trdiffs}{$\Delta_D$}
\newcommand{\setupsec}{\mypara}

We tested \xourAlg\ on the image generation task. 

\subsection{Experimental setup}
\label{sec:setup}
\setupsec{Baseline methods} 
For comparison, we also tested the following three methods: 
the original GAN {\em without} the \logd\ trick ({\em \gano} in short), 
GAN {\em with} the \logd\ trick ({\em \gand}), and 
WGAN with the gradient penalty ({\em \wgangp}) \cite{WGANgp17}.  
The choice of \gano\ and \gand\ is 
due to its relation to \xourAlg\ as analyzed above.  
\wgangp\ was chosen as a representative of state-of-the-art methods, 
as it was shown to rival or outperform a number of previous methods such as 
the original WGAN with weight clipping \cite{WGAN17}, 
Least Squares GAN \cite{LSGAN17}, 
Boundary Equilibrium GAN \cite{BEGAN17}, 
GAN with denoising feature matching \cite{WB17}, 
and
Fisher GAN \cite{WGANfs17}.

%
\newcommand{\pr}{p}
\setupsec{Evaluation metrics}
Making reliable likelihood estimates with generative adversarial models 
is known to be challenging \cite{TOB16}, and 
we instead focused on evaluating the visual quality of generated images, 
using datasets that come with labels for classification.  
We measured the {\em inception score} 
\cite{SGZCRC16}.  
The intuition behind this score is that high-quality images should lead to high confidence in classification. 
It is defined as 
$\exp({\mathbb E}_\gen{\rm KL}(\pr(y|\gen)||\pr(y)))$ 
where $\pr(y|\gen)$ is the label distribution conditioned on generated data $\gen$ 
and $\pr(y)$ is the label distribution over the generated data.  
Following previous work, e.g., \cite{YKBP17,CLJBL17}, 
the probabilities were estimated by a classifier trained with the labels provided with the datasets  
(instead of the ImageNet-trained {\em inception} model used in \cite{SGZCRC16}) 
so that the image classes of interest were well represented in the classifier.  
We, however, call this score the `inception score', %
following custom.
We also compared the label distributions over generated data and real data, but 
we found that in our settings this measure roughly correlates to the inception score
(generally, a very good match when the methods produce decent inception scores), 
and so we do not report it to avoid redundancy.
We note that these metrics are limited, e.g., they would not detect mode collapse
or missing modes 
within a class.  Apart from that, 
we found the inception score to generally correspond to human perception well.

\setupsec{Data}
We used  
MNIST, 
the Street View House Numbers dataset (SVHN) \cite{SVHN},  and 
the large-scale scene understanding (LSUN) dataset. 
These datasets are provided with 
class labels (digits `0' -- `9' for MNIST and SVHN and 10 scene types for LSUN). 
A number of studies have used only one LSUN class (`bedroom').
Since a single-class dataset would preclude 
evaluation using class labels, %
we instead generated a balanced two-class dataset 
using the same number of 
images 
from the `bedroom' class and the `living room' class (LSUN BR+LR).  
Similarly, we generated a balanced dataset from 
`tower' and `bridge' 
(LSUN T+B). 
The number of 
real images used for training was 
60K (MNIST), 521K (SVHN), %
2.6 million (LSUN BR+LR), 
and 1.4 million (LSUN T+B). %
The LSUN images 
were shrunk and cropped into 64$\times$64 %
as in previous studies. 
The pixel values were scaled into $[-1,1]$.  

\setupsec{Network architectures} 
\label{sec:net}
The tested methods require 
as input a network architecture of a discriminator and that of an approximator or a generator.   
Among the 
numerous network architectures we could experiment with, we focused on two types 
with two distinct merits -- good results and simplicity. 

The first type (convolutional; stronger) aims at complexity appropriate for the dataset 
so that good results can be obtained.  
On MNIST and SVHN, we used an extension of DCGAN \cite{DCGAN15}, 
adding 1$\times$1 convolution layers.  
Larger (64$\times$64) images of LSUN were found to benefit from more complex networks, and so 
we used a residual net (ResNet) \cite{HZRS15}
of four residual blocks, 
which is a simplification from the \wgangp\ code release, 
for both the discriminator and the approximator/generator. 
Details are given in the Appendix.  

\newcommand{\BNshort}{batch normalization}
These networks include batch normalization 
layers \cite{IS15}. 
The original study %
states that 
\wgangp\ does not work well with a discriminator with \BNshort. 
Although it would be ideal to use exactly the same 
networks 
for all the methods, 
it would be rather unfair for the other methods if we always remove \BNshort.  
Therefore, %
in each setting, 
we tested \wgangp\ 
with the options of either removing \BNshort\ only from $D$ or from both $D$ and $G$, 
and picked the best. 
(We also tried other normalizations such as layer normalization but did not see any merit.) 
In addition, we tested some cases 
without \BNshort\ anywhere for all the methods. 

The second type (fully-connected $\xG$ or $\ganG$; weaker) 
uses a minimally simple approximator/generator, 
consisting of two 512-dim fully-connected layers 
with ReLU, followed by the output layer with $\tanh$, 
which has a merit of simplicity, requiring less design effort. 
We combined it with a convolutional discriminator, the DCGAN extension above. 

\setupsec{\xourAlg\ implementation details}
To speed up training, we limited the number of epochs of 
the approximator training in \xourAlg\ 
to 10 while reducing the learning rate by multiplying by 0.1 whenever 
the training loss stops going down.  
The scaling function $s(x)$ in \ourAlg\ 
was set to $s(x)=1$.  
\newcommand{\iniG}{G_{\rm rand}}
To initialize the approximator $\xG$ for \xourAlg, 
we first created a simple generator $\iniG(\inpt)$ consisting of a projection layer 
with random weights 
(Gaussian with 0 mean and 0.01 stddev)
to produce 
the desired dimensionality, 
and then trained $\xG$ to approximate 
$\iniG$. 
The training time reported below includes the time spent for this initialization. 

\begin{table}
\begin{small}
\begin{center} \begin{tabular}{|c|l|c|}
\hline
$\minib$ & mini-batch size & 64  \\
\hline
$|\inptPl|$ & input pool set size & 10$\minib$ \\
\hline
$\dupd$ & discriminator update frequency & 1 \\
\hline
$T$ & number of iterations in \ourAlg\ & 25\\
\hline
\end{tabular} \end{center}
\end{small}
\vskip -0.15in
\caption{ \capfont \label{tab:meta}
Meta-parameters for \xourAlg. 
}
\end{table}
\setupsec{Other details}
\label{sec:meta}
In all cases, the prior $\prior$ was set to generate 
100-dimensional 
Gaussian 
vectors with zero mean and standard deviation 1. 
All the experiments were done using a single NVIDIA Tesla P100. %

The meta-parameter values for \xourAlg\ were fixed to those in Table \ref{tab:meta}
except when we pursued smaller values for $T$ for practical advantages 
(described below).  
For GAN, we used the same mini-batch size as \xourAlg, 
and we set the discriminator update 
frequency $\dupd$ to 1 as other values led to poorer results. 
The SGD update was done 
with {\em rmsprop} \cite{rmsprop} for \xourAlg\ and GAN.  %
The learning rate for rmsprop was fixed for \xourAlg, %
but we tried several values for GAN as it turned out to be critical.   
Similarly, for \xourAlg, 
we found it important to set the step size $\eta$ 
for the generator update in \ourAlg\ to an appropriate value. 
The SGD update for \wgangp\ was done with Adam \cite{Adam15}
as in the original study. %
We set the meta-parameters for \wgangp\ to the suggested values, 
except that we tried several values for the learning rate.  %
Thus, the amount of tuning effort was about the same for all 
but \wgangp, which required additional search for the normalization options.  
Tuning was done based on 
the inception score on the validation set of 10K input vectors 
(i.e., 10K 100-dim Gaussian vectors), and we report inception scores on the 
test set of 10K input vectors, disjoint from the validation set. 

\begin{figure*}[h]
\centering
\includegraphics[width=6.2in]{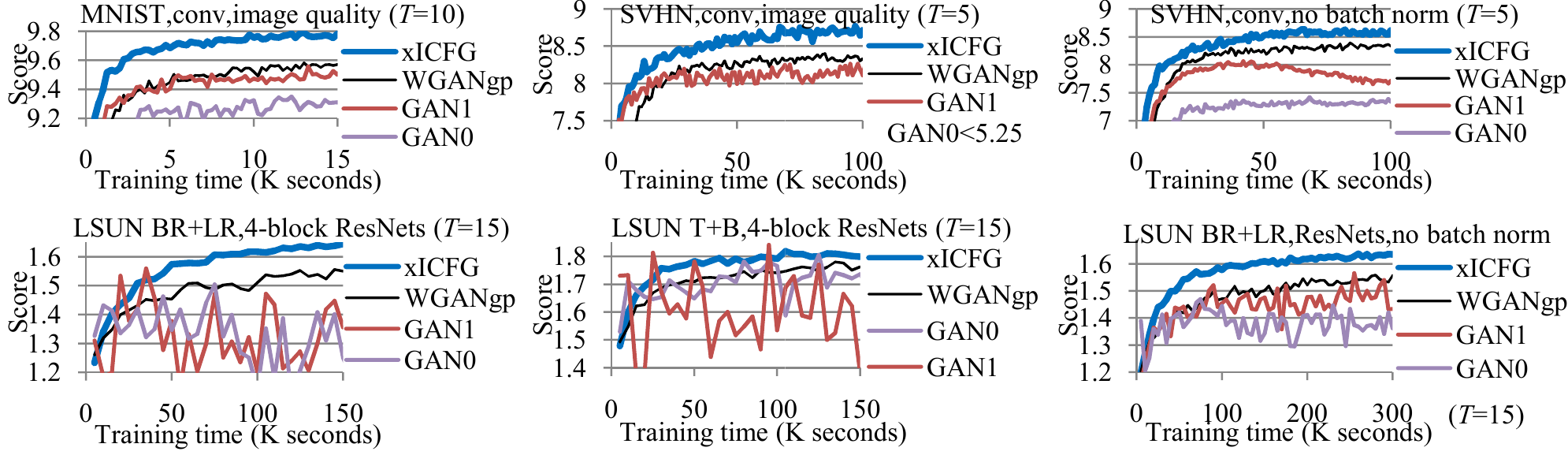}
\vskip -0.2in
\caption{\label{fig:conv-i} \capfont
Image quality (measured by the inception score) in relation to training time.  Convolutional networks. 
The legends are sorted from the best to the worst. 
The two graphs on the right are without batch normalization anywhere for all the methods. 
}
\end{figure*}
\begin{figure*}[h]
\centering
\includegraphics[width=6.25in]{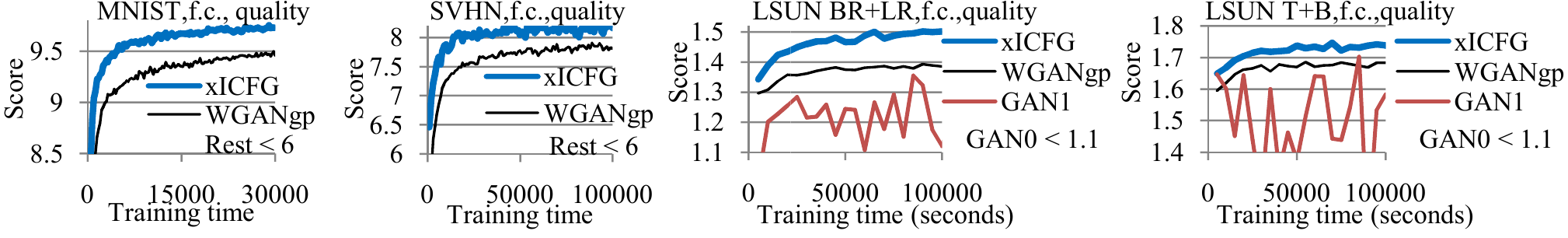}
\vskip -0.2in
\caption{\label{fig:fc-i} \capfont
Image quality (measured by the inception score) in relation to training time.  Fully-connected approximators/generators. 
}
\end{figure*}
\subsection{Results}

\newcommand{\btw}{0.2in}
\ifdefined \nomnist
\else
\newcommand{\mwd}{1in}
\newcommand{\mrwd}{6in}
\newcommand{\mln}{0.33}

\newcommand{\mcii}{15}
\newcommand{\mciik}{15k}

\newcommand{\mfii}{30}
\newcommand{\mfiik}{30k}
\newcommand{\msiz}{8x2}
\newcommand{\msiztmp}{10x3}
\newcommand{\mrsiz}{30x1}
\begin{figure}[h]
\captionsetup[subfigure]{labelformat=empty}
\centering
\begin{subfigure}[b]{\mln\linewidth}
\begin{center}
\includegraphics[width=\mwd]{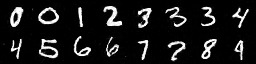}
\vskip -0.05in
\caption{\label{fig:m-xcfg-c-img} 
\xourAlg\ 
(9.78) %
}
\end{center}
\end{subfigure}%
\begin{subfigure}[b]{\mln\linewidth}
\begin{center}
\includegraphics[width=\mwd]{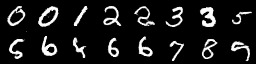}
\vskip -0.05in
\caption{\label{fig:m-best-c-img} 
Best baseline
(9.57) %
}
\end{center}
\end{subfigure}%
\begin{subfigure}[b]{\mln\linewidth}
\begin{center}
\includegraphics[width=\mwd]{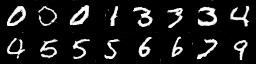}
\vskip -0.05in
\caption{\label{fig:m-worst-c-img} 
Worst baseline
(9.31) %
}
\end{center}
\end{subfigure}%
\vskip -0.2in
\caption{\label{fig:m-c-img} 
MNIST.  Using convolutional networks as in Fig \ref{fig:conv-i}. 
In this and all the image figures below, 
the images were randomly chosen and sorted by the predicted classes; 
the numbers are the inception scores averaged over 10K images.  
}
\vskip 0.05in
\captionsetup[subfigure]{labelformat=empty}
\begin{subfigure}[b]{\mln\linewidth}
\begin{center}
\includegraphics[width=\mwd]{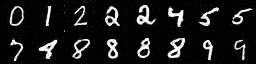}
\vskip -0.05in
\caption{\label{fig:m-xcfg-f-img} 
\xourAlg\ 
(9.72)
}
\end{center}
\end{subfigure}%
\begin{subfigure}[b]{\mln\linewidth}
\begin{center}
\includegraphics[width=\mwd]{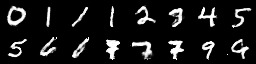}
\vskip -0.05in
\caption{\label{fig:m-best-f-img} 
Best baseline
(9.45) %
}
\end{center}
\end{subfigure}%
\begin{subfigure}[b]{\mln\linewidth}
\begin{center}
\includegraphics[width=\mwd]{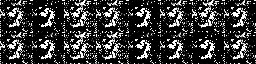}
\vskip -0.05in
\caption{\label{fig:m-worst-f-img} 
Worst baseline
(1.08)
}
\end{center}
\end{subfigure}%
\vskip -0.2in
\caption{\label{fig:m-f-img} \capfont 
MNIST.  Using fully-connected $\xG$ or $\ganG$ as in Fig \ref{fig:fc-i}.     
}
\vskip 0.05in
\fi
\newcommand{\srsz}{24x2}
\newcommand{\sgsz}{5x2}
\newcommand{\sgsztmp}{8x5}
\newcommand{\swd}{1.05in}
\newcommand{\srwd}{3.15in}
\newcommand{\sln}{0.33333}

\newcommand{\sii}{100}
\newcommand{\siik}{100k}
\ifdefined \nomnist
\begin{figure}
\captionsetup[subfigure]{labelformat=empty}
\fi
\centering
\begin{subfigure}[b]{\sln\linewidth}
\captionsetup{justification=centering}
\begin{center}
\includegraphics[width=\swd]{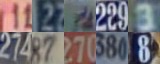}
\vskip -0.05in
\caption{\label{fig:s-xcfg-c-img} 
\xourAlg\
(8.72)
}
\end{center}
\end{subfigure}%
\begin{subfigure}[b]{\sln\linewidth}
\captionsetup{justification=centering}
\begin{center}
\includegraphics[width=\swd]{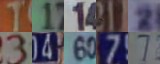}
\vskip -0.05in
\caption{\label{fig:s-best-c-img} 
Best baseline
(8.33) %
}
\end{center}
\end{subfigure}%
\begin{subfigure}[b]{\sln\linewidth}
\captionsetup{justification=centering}
\begin{center}
\includegraphics[width=\swd]{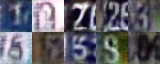}
\vskip -0.05in
\caption{\label{fig:s-worst-c-img} 
Worst baseline
(4.59)
}
\end{center}
\end{subfigure}%
\vskip -0.2in
\caption{\label{fig:s-c-img} \capfont
SVHN.  %
Using convolutional networks
as in Fig \ref{fig:conv-i}. 
\ifdefined \nomnist
In this and all the image figures below, 
the images were randomly chosen and sorted by the predicted classes; 
the numbers are the inception scores averaged over 10K images.  
\fi
}
\vskip 0.05in
\begin{subfigure}[b]{\sln\linewidth}
\captionsetup{justification=centering}
\begin{center}
\includegraphics[width=\swd]{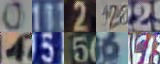}
\vskip -0.05in
\caption{\label{fig:s-xcfg-f-img} 
\xourAlg\
(8.16) %
}
\end{center}
\end{subfigure}%
\begin{subfigure}[b]{\sln\linewidth}
\captionsetup{justification=centering}
\begin{center}
\includegraphics[width=\swd]{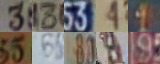}
\vskip -0.05in
\caption{\label{fig:s-best-f-img} 
Best baseline
(7.81)
}
\end{center}
\end{subfigure}%
\begin{subfigure}[b]{\sln\linewidth}
\captionsetup{justification=centering}
\begin{center}
\includegraphics[width=\swd]{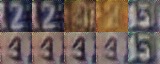}
\vskip -0.05in
\caption{\label{fig:s-worst-f-img} 
Worst baseline
(3.37)
}
\end{center}
\end{subfigure}%
\vskip -0.2in
\caption{\label{fig:s-img} \capfont
SVHN.  
Using fully-connected 
$\xG$ or $\ganG$ 
as in Fig \ref{fig:fc-i}. 
}
\end{figure}

\newcommand{\brwd}{6.36in}
\newcommand{\brsz}{12x2}

\newcommand{\bwd}{2.2in}
\newcommand{\bgsz}{5x1}
\newcommand{\bgsztmp}{4x6}
\newcommand{\bgszf}{5x1}
\newcommand{\bgszftmp}{4x5}

\newcommand{\bln}{0.3333333333333}
\newcommand{\bcii}{150}
\newcommand{\bfii}{100}
\newcommand{\bciik}{\bcii k}
\newcommand{\bfiik}{100k}
\begin{figure*}
\centering
\begin{subfigure}[b]{\bln\linewidth}
\begin{center}
\includegraphics[width=\bwd]{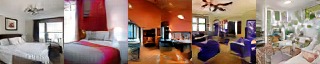}
\vskip -0.075in
\caption{\label{fig:b-xcfg-c-img} 
ResNets. \xourAlg\ (1.64)
}
\end{center}
\end{subfigure}%
\begin{subfigure}[b]{\bln\linewidth}
\begin{center}
\includegraphics[width=\bwd]{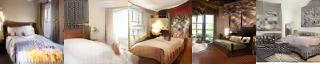}
\vskip -0.075in
\caption{\label{fig:b-best-c-img} 
ResNets. Best baseline (1.56)
}
\end{center}
\end{subfigure}%
\begin{subfigure}[b]{\bln\linewidth}
\begin{center}
\includegraphics[width=\bwd]{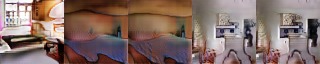}
\vskip -0.075in
\caption{\label{fig:b-bestgan-c-img} 
ResNets. Second best baseline (1.35) 
}
\end{center}
\end{subfigure}%
\quad
\vskip 0.025in
\begin{subfigure}[b]{\bln\linewidth}
\begin{center}
\includegraphics[width=\bwd]{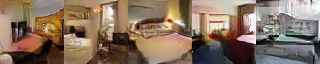}
\vskip -0.075in
\caption{\label{fig:b-xcfg-f-img} 
f.c. \xourAlg\ (1.50)
}
\end{center}
\end{subfigure}%
\begin{subfigure}[b]{\bln\linewidth}
\begin{center}
\includegraphics[width=\bwd]{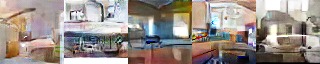}
\vskip -0.075in
\caption{\label{fig:b-best-f-img} 
f.c. Best baseline (1.39)
}
\end{center}
\end{subfigure}%
\begin{subfigure}[b]{\bln\linewidth}
\begin{center}
\includegraphics[width=\bwd]{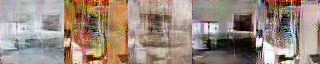}
\vskip -0.075in
\caption{\label{fig:b-bestgan-f-img} 
f.c. Second best baseline (1.12)
}
\end{center}
\end{subfigure}%
\vskip -0.2in
\caption{\label{fig:b-img} \capfont
LSUN bedrooms and living rooms (64$\times$64). 
(a-c) 4-block ResNets as in Fig \ref{fig:conv-i}. 
(d-f) Fully-connected $\xG$/$\ganG$ as in Fig \ref{fig:fc-i}. 
}
\end{figure*}

\newcommand{\twd}{2.2in}
\newcommand{\tgsz}{5x1}
\newcommand{\tgsztmp}{4x6}
\newcommand{\tgszf}{5x1}
\newcommand{\tgszftmp}{4x5}

\newcommand{\trsz}{12x2}
\newcommand{\trwd}{6.36in}

\newcommand{\tln}{0.3333333333333}
\newcommand{\tcii}{150}
\newcommand{\tfii}{100}
\newcommand{\tciik}{\tcii k}
\newcommand{\tfiik}{100k}
\begin{figure*}
\centering
\begin{subfigure}[b]{\tln\linewidth}
\begin{center}
\includegraphics[width=\twd]{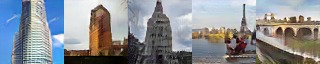}
\vskip -0.075in
\caption{\label{fig:t-xcfg-c-img} 
conv. \xourAlg\ (1.80) 
}
\end{center}
\end{subfigure}%
\begin{subfigure}[b]{\tln\linewidth}
\begin{center}
\includegraphics[width=\twd]{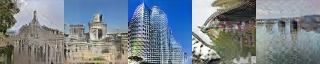}
\vskip -0.075in
\caption{\label{fig:t-best-c-img} 
conv. Best baseline (1.76)
}
\end{center}
\end{subfigure}%
\begin{subfigure}[b]{\tln\linewidth}
\begin{center}
\includegraphics[width=\twd]{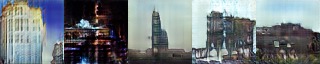}
\vskip -0.075in
\caption{\label{fig:t-bestgan-c-img} 
conv. Second best baseline (1.74) 
}
\end{center}
\end{subfigure}%
\quad
\vskip 0.025in
\begin{subfigure}[b]{\tln\linewidth}
\begin{center}
\includegraphics[width=\twd]{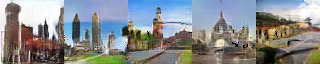}
\vskip -0.075in
\caption{\label{fig:t-xcfg-f-img} 
f.c. \xourAlg\ (1.74)
}
\end{center}
\end{subfigure}%
\begin{subfigure}[b]{\tln\linewidth}
\begin{center}
\includegraphics[width=\twd]{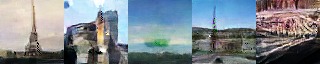}
\vskip -0.075in
\caption{\label{fig:t-best-f-img} 
f.c. Best baseline (1.68)
}
\end{center}
\end{subfigure}%
\begin{subfigure}[b]{\tln\linewidth}
\begin{center}
\includegraphics[width=\twd]{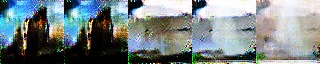}
\vskip -0.075in
\caption{\label{fig:t-bestgan-f-img} 
f.c. Second best baseline (1.58)
}
\end{center}
\end{subfigure}%
\vskip -0.2in
\caption{\label{fig:t-img} \capfont
LSUN towers and bridges (64$\times$64). 
(a-c) 4-block ResNets as in Fig \ref{fig:conv-i}. 
(d-f) Fully-connected $\xG$/$\ganG$ as in Fig \ref{fig:fc-i}.  
}
\end{figure*}

\newcommand{\ggbwdreal}{0.88in}
\newcommand{\ggbwdgood}{0.88in}
\newcommand{\ggbwdvar}{1.32in}
\newcommand{\ggblnreal}{0.28}
\newcommand{\ggblngood}{0.28}
\newcommand{\ggblnvar}{0.42}

\begin{figure}
\centering
\begin{subfigure}[b]{\ggblnreal\linewidth}
\begin{center}
\includegraphics[width=\ggbwdreal]{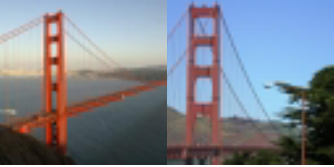}
\vskip -0.08in
\caption{\label{fig:goldreal} 
Real images. 
}
\end{center}
\end{subfigure}%
\begin{subfigure}[b]{\ggblngood\linewidth}
\begin{center}
\includegraphics[width=\ggbwdgood]{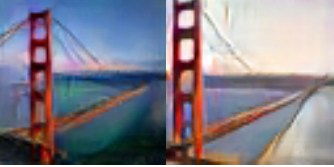}
\vskip -0.08in
\caption{\label{fig:goldgood} 
``Realistic'' 
}
\end{center}
\end{subfigure}%
\begin{subfigure}[b]{\ggblnvar\linewidth}
\begin{center}
\includegraphics[width=\ggbwdvar]{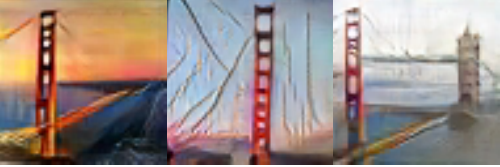}
\vskip -0.08in
\caption{\label{fig:goldvar} 
``Creative''
}
\end{center}
\end{subfigure}%
\vskip -0.2in
\caption{\capfont \label{fig:gold}
(a) Real Golden Gate Bridge images in LSUN T+B; the red tower has 4 grids.  
(b) Images generated by \xourAlg\ that look like Golden Gate Bridge though not perfect. 
(c) Images generated by \xourAlg\ that look like {\em modifications} of Golden Gate Bridge 
with more grids or connected with an object that is not there in reality.   
} 
\end{figure}

First, we report the inception score results.  
The scores of the real data (a held-out set of 10K images) 
are 9.91 (MNIST), 9.13 (SVHN), 1.84 (LSUN BR+LR), and 1.90 (LSUN T+B), respectively,
which roughly set the upper bounds 
that can be achieved by generated images.
Figure \ref{fig:conv-i} shows the score of generated images (in relation to training time) 
with the convolutional networks, %
including the two cases without batch normalization anywhere for all the methods 
(upper-right and lower-right).  
Recall that 
a smaller $T$ has practical advantages of a smaller generator resulting in faster generation and smaller footprints
while a larger $T$ stabilizes \xourAlg\ training by ensuring that training makes progress 
by overcoming the degradation caused by approximation.  
With convolutional approximators, 
we explored values for $T$ by a decrement of 5 
starting from $T$=25 (which works well for all) 
and found that $T$ can be reduced to 5 (SVHN), 10 (MNIST), and 15 (both LSUN) 
without negative consequences.  
The results in Figure \ref{fig:conv-i} were obtained by using these smaller $T$. 
\xourAlg\ generally outperforms the others. %
Although on LSUN datasets \gand\ occasionally exceeds
\xourAlg, inspection of generated images reveals that it suffers from 
severe mode collapse.
The 
results with the 
simple but weak 
fully-connected approximator/generator are shown in Figure \ref{fig:fc-i}.  
Among the baseline methods, only \wgangp\ succeeded in this setting, but 
its score fell behind \xourAlg.  
These results show that \xourAlg\ is effective and efficient.  

Examples of generated images are shown in Figures \ref{fig:m-c-img}--\ref{fig:t-img}.  
Note however that a small set of images %
may not represent the entire population well
due to variability.  
Looking through larger sets of generated images, 
we found that roughly, when the inception score is higher, the images are sharper 
and/or there are fewer images that are harder to tell what they are, and that 
when the score fluctuates violently (as \gand\ does on LSUN), severe mode collapse is observed. 
Overall, 
we feel that the images generated by \xourAlg\ are better than or 
at least as good as 
those of the best-performing baseline, \wgangp, 
one of the state-of-the-art methods. 

\begin{figure}[H]
\centering
\begin{subfigure}[b]{0.5\linewidth}
\begin{center}
\includegraphics[width=1.5in]{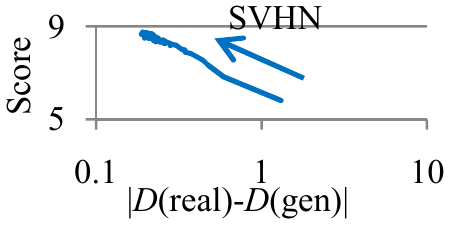}
\vskip -0.1in
\end{center}
\end{subfigure}%
\begin{subfigure}[b]{0.5\linewidth}
\begin{center}
\includegraphics[width=1.5in]{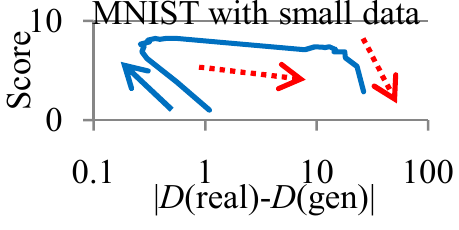}
\vskip -0.1in
\end{center}
\end{subfigure}%
\vskip -0.15in
\caption{\label{fig:cfg-trdiff} \capfont
Image quality ($y$-axis) vs. \trdiffs (=\trdiff, $x$-axis).  
\xourAlg. 
The arrows indicate the direction of time flow. 
A correlation is observed both when training is succeeding 
(blue solid arrows) and failing (red dotted arrows).       
}
\end{figure}
\mypara{Discriminator output values}

Successful training should make it harder and harder for the discriminator  
to distinguish real images and generated images, 
which would manifest as the discriminator output values for real images and generated images 
becoming closer and closer. 
In Figure \ref{fig:cfg-trdiff}, 
the $y$-axis is the inception score, and the $x$-axis is 
`\trdiff' (\trdiffs\ in short), which is the 
difference between the discriminator output values for real images and 
generated images averaged over time intervals of a fixed length, 
obtained as a by-product of the forward propagation for updating the discriminator. 
The arrows indicate the direction of time flow.  
When training is going well (indicated by blue solid arrows), \trdiffs\ decreases and the inception 
score improves as training proceeds. 
When it is failing, \trdiffs\ goes up rapidly and the inception scores degrades rapidly
 (red dotted arrows in Fig \ref{fig:cfg-trdiff} right).  
Here, the discriminator is overfitting to the small set of real data (1000 MNIST examples), 
violating the $\epsilon$-approximation assumption.  
That slows down and eventually stops the progress of the generator, resulting in 
the increase of \trdiffs.  In practice, training should be stopped before the rapid growth of \trdiffs.  
Thus, the decrease/increase of \trdiffs\ values (which can be obtained at almost no cost 
during training) can be used as an indicator of the status of \xourAlg\ training, 
similar to WGAN and in contrast to GAN.  
\mypara{Use of the approximator as a generator}
As noted above, a smaller generator resulting from a smaller $T$ has 
practical advantages, %
but a larger $T$ stabilizes training.  
One way to reduce generator size without reducing $T$ is to use the final approximator $\xG$ 
(after completing regular \xourAlg\ training)
as a generator, 
at the expense of performance degradation.  
We show below the inception scores of the final approximator 
($G_0$ in the final call of \ourAlg) 
in comparison with the final generator 
($G_T$ in the final call of \ourAlg) 
in the settings of Figure \ref{fig:conv-i} (convolutional). %
\begingroup
\begin{center}
\includegraphics[width=3in]{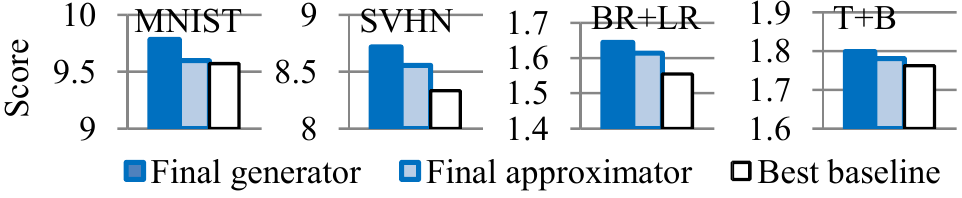}
\end{center}
\vskip -0.1in
Although the final approximator
underperforms the final generator 
as expected, 
it rivals and sometimes exceeds 
the best baseline method, whose 
generator has the same size 
as the approximator. 
Thus, use of the final approximator may be a viable option. 
The results also indicate that the 
good 
performance 
of \xourAlg\ 
is due to 
not only having a larger (and therefore more complex) 
generator but also stable and efficient training, which makes it possible to 
refine the approximator network to the degree that it can outperform the baseline methods.  

\mypara{Not memorization}
Finally, in Fig \ref{fig:gold} we show examples indicating 
\xourAlg\ does something more than memorization.

\section{Conclusion}
In the generative adversarial learning setting, 
we considered a generator that can be obtained using composite functional gradient learning.
Our theoretical results led to the new stable algorithm \xourAlg.  
The experimental results showed that \xourAlg\ generated equally good or better images 
than GAN and WGAN variants in a stable manner. %

\clearpage
\bibliography{cfg-icmlcr}
\bibliographystyle{icml2018}

\ifx \myarxiv \undefined 
\else
\onecolumn

\begin{center}
{\Large \bf
Appendix
}
\end{center}

\newcommand{\refsecassump}{\ref{sec:assump}} %
\newcommand{\refthmgrad}{\ref{thm:grad}}   %
\newcommand{\refsecsetup}{\ref{sec:setup}} %
\newcommand{\reffigconv}{\ref{fig:conv-i}} %
\newcommand{\reftabmeta}{\ref{tab:meta}} %
\appendix
\section{Main theorem and its proof}

\newcommand{\BlackBox}{\rule{1.5ex}{1.5ex}}  
\newenvironment{proof}{\par\noindent{\bf Proof\ }}{\hfill\BlackBox\\[2mm]}

\newcommand{\idealDb}{{\mathcal{D}_\beta}}

\newcommand{\Db}{D_\beta}

Theorem \ref{thm:bgrad} below, our main theorem, analyzes the extended KL-divergence for some $\beta \in (0.5,1]$ 
defined as follows: 
\[
L_\beta(p) := \int (\beta p_*(x) + (1-\beta) p(x)) \ln \frac{\beta
  p_*(x) + (1-\beta) p(x)}{(1-\beta) p_*(x) + \beta
  p(x)} d x~.  %
\]
Theorem \refthmgrad\ above is a simplified version of Theorem \ref{thm:bgrad}, 
and it can be obtained by setting $\beta=1$ in Theorem \ref{thm:bgrad}.  
The assumption of smooth and light-tailed $p_*$ in Section \ref{sec:assump}  
is precisely stated in Assumption \ref{asump:p} below.  

We first state the assumptions and then present Theorem \ref{thm:bgrad} and its proof.  

\subsection{Assumptions of Theorem \ref{thm:bgrad}}
\label{app:assump}

\paragraph{A strong discriminator} 
Given a set $\realset$ of real data, a set $\genset$ of generated data, 
and $\beta \in (0.5,1]$ indicating that the probability 
$P(\real\mbox{ is real})=\beta$ for $\real \in \realset$, and 
$P(\gen\mbox{ is real})=1-\beta$ for $\gen \in \genset$, 
assume that we can obtain a strong discriminator $\Db$ that solves 
the following weighed logistic regression problem: 
\newcommand{\bb}{\beta}
\begin{align*}
\Db \approx
  \arg\min_{\localD} \sum_{i: x_i \in \realset \cup \genset } 
    w_i \left[ \bb_i \ln (1 + \exp(-\localD(x_i)) ) + (1-\bb_i) \ln (1 + \exp( \localD(x_i)) ) \right] \\                   
    (\bb_i,w_i) = \left \{ \begin{array}{lll}
                             (\beta,   ~1/|\realset|)& \mbox{ for } x_i \in \realset & \mbox{ (real data) }\\
                             (1-\beta, ~1/|\genset|) & \mbox{ for } x_i \in \genset  & \mbox{ (generated data)}
                   \end{array} \right .                   
\end{align*}   
Define a quantity $\idealDb(\any)$ by 
\[
\idealDb(\any) :=\ln \frac{ \beta p_*(\any) + (1-\beta)\genp(\any)}{ (1-\beta)p_*(\any) + \beta\genp(\any) }
\] 
where $p_*$ and $p$ are the probability density functions of real data and generated data, 
respectively, 
and assume that there exists a positive number $B < \infty$ such that $|\idealDb(\any)| \le B$.  
Note that 
  if we choose $\beta <1$, then we can take $B =\ln (\beta/(1-\beta)) < \infty$, 
  and the assumption on $p_*$ and $p$ can be relaxed from being both nonzero (with $\beta=1$) 
  to not being zero at the same time. 
  However, in practice, one can simply take $\beta=1$, and that is what was done 
  in our experiments, because 
  the practical behavior of choosing $\beta \approx 1$ is similar to $\beta=1$.  

When the number of given examples is sufficiently large,  
the standard statistical consistency theory of logistic regression implies that 
\[
\Db(\any) \approx \idealDb(\any)~.
\]
Therefore, 
assume that 
the following $\epsilon$-approximation condition is satisfied
for a small $\epsilon>0$: 
\begin{equation}
\int p_*(\any) \max(1,\|\nabla \ln p_*(\any)\|) \left(\left|\Db(\any)-\idealDb(\any)\right|+\left|e^{\Db(\any)}-e^{\idealDb(\any)}\right|\right) d \any \le \epsilon~. \label{eq:eps}
\end{equation} 

\paragraph{Smooth light-tailed $p_*$} 
For convenience, we impose the following assumption.
\begin{assumption}   \label{asump:p}
  There are constants $c_0, h_0>0$ such that when $h \in (0,
  h_0)$, we have
  \begin{align*}
    & \int \sup_{\|g\| \leq h} |p_*(x) + \nabla p_*(x)^\top g - p_*(x +
    g)| d x \leq c_0 h^2 , \\
    & \int \frac{\sup_{\|g\| \leq h} |p_*(x+g)-p_*(x)|^2}{p_*(x)} d x \leq c_0  
      h^2 , \\
    & \int \|\nabla p_*(x)\| d x \leq c_0 .
  \end{align*}
\end{assumption}
The assumption says that $p_*$ is a smooth density function
with light tails. For example, common exponential distributions such as
Gaussian distributions and mixtures of Gaussians all satisfy the
assumption.  It is worth mentioning that the assumption is not truly needed
for the algorithm. This is because an arbitrary 
distribution can always be approximated to an arbitrary precision by mixtures of Gaussians.
Nevertheless, the assumption simplifies the statement of our analysis
(Theorem~\ref{thm:bgrad} below) because we do not have to deal
with such approximations. 

Also, to meet the assumption in practice, 
one can add a small Gaussian noise to every observed data point, as also noted 
by \cite{AB17}, but we empirically found that our method works without adding noise
on image generation, which is good as we have one fewer meta-parameters.  

\subsection{Theorem \ref{thm:bgrad} and its proof}

\begin{theorem}
Under the assumptions in Appendix \ref{app:assump}, 
let $g: \dataspace \to \dataspace$ be a continuously
differentiable transformation such that $\|g(\cdot)\| \leq a$ and 
$\|\nabla g(\cdot)\| \leq b$.
Let $\genp$ be the probability density of a random variable $\rvGen$, and 
let $\nextgenp$ be the probability density of the random variable $\rvNextgen$ 
such that $\rvNextgen=\rvGen+\eta g(\rvGen)$ where  $0 < \eta < \min(1/b,h_0/a)$.
Then there exists a
positive constant $c$ such that for all $\epsilon>0$:
\[
L_\beta(\nextgenp) \leq L_\beta(\genp) 
- \eta \int p_*(\any)  u(\any) \; g(\any)^\top\nabla \Db(\any)   \; d \any
+ c \eta^2  + c \eta \epsilon ,
\]
where $u(\any) = \beta -(1-\beta) \exp(\Db(\any))$.  
\label{thm:bgrad}
\end{theorem}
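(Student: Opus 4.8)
The plan is to first remove the $\eta$-dependence from the (possibly non-smooth) generated density $\genp$ and transfer it entirely onto the smooth density $p_*$, by a change of variables. Write $T(x)=x+\eta g(x)$ and $J(x)=\det(I+\eta\nabla g(x))$; for $\eta<1/b$ the map $T$ is a diffeomorphism with $J>0$, and the pushforward density obeys $\nextgenp(T(x))\,J(x)=\genp(x)$. Substituting $x=T(y)$ into $L_\beta(\nextgenp)=\int m_1'(x)\ln(m_1'(x)/m_2'(x))\,dx$, where $m_1'=\beta p_*+(1-\beta)\nextgenp$ and $m_2'=(1-\beta)p_*+\beta\nextgenp$, the factor $J$ cancels and one obtains $L_\beta(\nextgenp)=\int\tilde m_1(y)\ln(\tilde m_1(y)/\tilde m_2(y))\,dy$, with $\tilde m_1=\beta\tilde p_*+(1-\beta)\genp$, $\tilde m_2=(1-\beta)\tilde p_*+\beta\genp$, and $\tilde p_*(y)=p_*(T(y))\,J(y)$. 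Thus $L_\beta(\nextgenp)$ is exactly $L_\beta(\genp)$ with $p_*$ replaced by the perturbed density $\tilde p_*$ while $\genp$ is held fixed; this is the crucial reduction, since from here I only expand in the smooth density $p_*$.

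Since $\|\eta g\|\le\eta a<h_0$, Assumption~\ref{asump:p} applies with $h=\eta a$ and gives $\tilde p_*=p_*+\eta\,\nabla\cdot(p_* g)+\tilde R$ with $\int|\tilde R|\,dy=O(\eta^2)$ (the remainder and cross terms being controlled by the first and third inequalities of Assumption~\ref{asump:p} together with $J=1+\eta\,\nabla\cdot g+O(\eta^2)$). Set $\Delta=\tilde p_*-p_*$, so $\tilde m_1=m_1+\beta\Delta$ and $\tilde m_2=m_2+(1-\beta)\Delta$ with $m_1,m_2$ the unperturbed mixtures. Expanding $F(m_1,m_2)=m_1\ln(m_1/m_2)$ to first order in the increment $(\beta\Delta,(1-\beta)\Delta)$ and using $\partial_{m_1}F=\idealDb+1$, $\partial_{m_2}F=-e^{\idealDb}$ (recall $\idealDb=\ln(m_1/m_2)$), the first-order integrand equals $[\beta\idealDb+\beta-(1-\beta)e^{\idealDb}]\,\Delta$. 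Substituting $\Delta=\eta\,\nabla\cdot(p_* g)+\tilde R$, the $\tilde R$ part contributes $O(\eta^2)$ (the bracket being bounded via $|\idealDb|\le B$), leaving the main contribution $\eta\int[\beta\idealDb+\beta-(1-\beta)e^{\idealDb}]\,\nabla\cdot(p_* g)\,dy$.

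To turn the ideal $\idealDb$ into the trained discriminator $\Db$, I would replace $\idealDb$ and $e^{\idealDb}$ by $\Db$ and $e^{\Db}$ inside this scalar bracket \emph{before} integrating by parts. The replacement error is at most $\eta\int(|\Db-\idealDb|+|e^{\Db}-e^{\idealDb}|)\,|\nabla\cdot(p_* g)|\,dy$; since $|\nabla\cdot(p_* g)|\le a\|\nabla p_*\|+rb\,p_*\le(a+rb)\max(p_*,\|\nabla p_*\|)$ (with $r$ the dimension of $\dataspace$) and the weight in \eqref{eq:eps} is exactly $\max(p_*,\|\nabla p_*\|)$, this error is $O(\eta\epsilon)$. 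This is precisely why \eqref{eq:eps} is stated through $|\Db-\idealDb|$ and $|e^{\Db}-e^{\idealDb}|$ rather than through gradient deviations. Now the bracket $\beta\Db+\beta-(1-\beta)e^{\Db}$ is smooth because $\Db$ is a parametrized network, so integrating by parts is legitimate even though $\genp$ need not be smooth, and it produces $-\eta\int p_*\,u\,g^\top\nabla\Db\,dy$ with $u=\beta-(1-\beta)e^{\Db}$, the claimed leading term; the boundary terms vanish by the light tails of $p_*$.

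Finally I would bound the quadratic Taylor remainder of $F$, which weights $\Delta^2$ by $1/m_1$, $1/m_2$, and $m_1/m_2^2$. Since $m_1\ge\beta p_*$ and (for $\beta<1$) the ratio $m_1/m_2$ is bounded by $e^B$, these are dominated by $\int\Delta^2/p_*\,dy=O(\eta^2)$, which follows from the second inequality of Assumption~\ref{asump:p}. I expect this uniform $O(\eta^2)$ control of the remainder and of $\tilde R$ over all of $\dataspace$ to be the main obstacle: it is where the smooth, light-tailed assumption on $p_*$ is genuinely used, and where care is needed in low-density regions (the case $\beta=1$ being cleaner, since then the $m_2$-increment vanishes outright). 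Collecting the three contributions yields $L_\beta(\nextgenp)\le L_\beta(\genp)-\eta\int p_*\,u\,g^\top\nabla\Db\,dy+c\eta^2+c\eta\epsilon$.
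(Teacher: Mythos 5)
Your proposal is correct and follows essentially the same route as the paper's proof: the same change of variables transferring the $\eta$-perturbation onto $p_*$ (with $\tilde p_*(x)=p_*(f(x))|\det(\nabla f(x))|$), the same $O(\eta^2)$ estimates derived from Assumption A.1, the same $\epsilon$-condition swap of $\mathcal{D}_\beta$ and $e^{\mathcal{D}_\beta}$ for $D_\beta$ and $e^{D_\beta}$ before integrating by parts, and the same divergence-theorem step yielding the leading term $-\eta\int p_*\,u\,g^\top\nabla D_\beta$. The only organizational difference is that you Taylor-expand $m_1\ln(m_1/m_2)$ jointly in both arguments with a quadratic remainder (which, as you note, requires some care in lower-bounding the mixture densities at intermediate points), whereas the paper splits the logarithm into three pieces $A_1+B_1+C_1$ and controls the two correction pieces with the global inequality $\ln(1+\delta)\le\delta$, which sidesteps that issue entirely.
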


\newcommand{\refThm}{\ref{thm:bgrad}} %
\newcommand{\refAsumpp}{\ref{asump:p}} %

\paragraph{Notation}
We use $\|\cdot\|$ to denote the vector 2-norm 
and the matrix spectral norm (the largest singular value of a matrix).
Given a differentiable scalar function $h(\justx): \dataspace \to \Real$,
we use $\nabla h(\justx)$ to denote its gradient, which becomes a
$\datadim$-dimensional vector function. 
Given a differentiable function $g(\justx): \dataspace \to \dataspace$, we use
$\nabla g(\justx)$ to denote its Jacobi matrix and we use
$\nabla \cdot g(\justx)$ to denote the divergence of $g(\justx)$, defined as
\[
\nabla \cdot g(\justx) := \sum_{j=1}^\datadim \frac{\partial
  g(\justx)}{\partial [\justx]_j} ,
\]
where we use $[\justx]_j$ to denote the $j$-th component of $\justx$.  
We know that
\begin{equation}
\int \nabla \cdot w(x) d x = 0 \label{eq:proof-int-div}
\end{equation}
for all vector function $w(x)$ such that $w(\infty)=0$.

\begin{lemma}
Assume that $g(\gen): \dataspace \to \dataspace$ is a  continuously
differentiable transformation. Assume that $\|g(\gen)\| \leq a$
and $\|\nabla g(\gen)\| \leq b$, 
then as $\eta b <1$, the inverse
transformation $\gen = f^{-1}(\nextgen)$ of
$\nextgen = f(\gen)=\gen+\eta g(\gen)$ is unique.

Moreover, consider transformation of random variables by $f^{-1}(\cdot)$.  
Define $\tilde{p}_*$ to be the associated probability density function after this transformation 
when the pdf before the transformation is $p_*$.  
Then for any $\any \in \dataspace$, we have: 
\begin{equation}
\tilde{p}_*(\any) = p_*(f(\any)) |\det (\nabla f(\any))| .
\label{eq:proof-dens}
\end{equation}
Similarly, we have
\begin{equation}
\genp(\any) = \nextgenp(f(\any)) |\det (\nabla f(\any))| , 
\label{eq:proof-dens2}
\end{equation}
\label{lem:inverse}
where $\genp$ and $\nextgenp$ are defined in Theorem \ref{thm:bgrad}.  
\end{lemma}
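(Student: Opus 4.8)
The plan is to prove the lemma in two stages: first show that $f(\gen)=\gen+\eta g(\gen)$ is a global diffeomorphism of $\dataspace$ with a unique inverse, and then obtain the two density identities from the standard change-of-variables formula for a diffeomorphism. The invertibility stage carries the real content; the density formulas are then essentially bookkeeping once $f^{-1}$ is known to be smooth with the right Jacobian.

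For the uniqueness of the inverse, I would fix an arbitrary target $\nextgen\in\dataspace$ and recast the equation $f(\gen)=\nextgen$ as the fixed-point problem $\gen=\nextgen-\eta g(\gen)$. The map $T(\gen)=\nextgen-\eta g(\gen)$ satisfies $\|T(\gen_1)-T(\gen_2)\|=\eta\,\|g(\gen_1)-g(\gen_2)\|\le \eta b\,\|\gen_1-\gen_2\|$, where the last inequality uses $\|\nabla g(\cdot)\|\le b$ together with the mean value inequality for vector-valued maps. Since $\eta b<1$, the map $T$ is a contraction on the complete space $\dataspace$, so Banach's fixed-point theorem yields exactly one $\gen$ with $f(\gen)=\nextgen$. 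This simultaneously delivers surjectivity and the asserted uniqueness of $\gen=f^{-1}(\nextgen)$.

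Next I would record the regularity needed for the change of variables. Because $\nabla f(\any)=I+\eta\nabla g(\any)$ with $\|\eta\nabla g(\any)\|\le \eta b<1$, the Jacobian $\nabla f(\any)$ is invertible at every $\any$; moreover $\det\nabla f(\any)$ equals $1$ at $\eta=0$ and never vanishes, so by continuity $\det\nabla f(\any)>0$ throughout. The inverse function theorem then guarantees that $f^{-1}$ is continuously differentiable with $\nabla f^{-1}(f(\any))=(\nabla f(\any))^{-1}$, whence $|\det\nabla f^{-1}(f(\any))|=|\det\nabla f(\any)|^{-1}$. Applying the change-of-variables formula to the diffeomorphism $f^{-1}$ (whose inverse is $f$) gives \eqref{eq:proof-dens}, since $\tilde{p}_*$ is by definition the density obtained by pushing a variable of density $p_*$ through $f^{-1}$. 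For \eqref{eq:proof-dens2} I would instead use that the theorem sets $\rvNextgen=f(\rvGen)$, so pushing $\genp$ forward through $f$ gives $\nextgenp(\nextgen)=\genp(f^{-1}(\nextgen))\,|\det\nabla f^{-1}(\nextgen)|$; substituting $\nextgen=f(\any)$ and using $|\det\nabla f^{-1}(f(\any))|=|\det\nabla f(\any)|^{-1}$ rearranges to $\genp(\any)=\nextgenp(f(\any))\,|\det\nabla f(\any)|$.

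The main obstacle is establishing \emph{global} invertibility onto all of $\dataspace$, rather than mere local invertibility or injectivity; the contraction argument resolves this cleanly, and the hypothesis $\eta b<1$ is exactly what makes $T$ a contraction. A secondary point requiring care is keeping the direction of the change of variables straight --- tracking which variable is the pushforward of which --- so that the Jacobian factor $|\det\nabla f(\any)|$ (rather than its reciprocal) lands on the correct side of each identity.
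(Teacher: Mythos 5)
Your proposal is correct and follows essentially the same route as the paper: the inverse is obtained by recasting $f(\gen)=\nextgen$ as the fixed-point equation $\gen=\nextgen-\eta g(\gen)$ and applying the Banach contraction principle (the paper's map $g'(\justx)=\nextgen-\eta g(\justx)$ is exactly your $T$), and the two density identities are then read off from the standard change-of-variables formula. Your additional remarks on the nonvanishing of $\det\nabla f$ and on keeping the direction of the pushforward straight merely flesh out what the paper leaves implicit.
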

\begin{proof}
Given $\nextgen$, define map $g'(\justx)$ as $g'(\justx) = \nextgen - \eta g(\justx)$, then the
assumption implies that $g'(\justx)$ is a contraction when $\eta b < 1$:
$\|g'(\justx) - g'(\justx')\| \leq \eta b \|\justx-\justx'\|$.
Therefore $g'(\justx)$ has a unique fixed 
point $\gen$, which leads to the inverse transformation
$f^{-1}(\nextgen)=\gen$.

\eqref{eq:proof-dens} and \eqref{eq:proof-dens2} follow
from the standard density formula
under transformation of variables.
\end{proof}

\begin{lemma}
Under the assumptions of Lemma~\ref{lem:inverse}, 
there exists a constant $c>0$ such that
\begin{equation}
|\det (\nabla f(\gen)) - (1 + \eta \nabla \cdot g(\gen))|
\leq c \eta^2 . \label{eq:proof-det}
\end{equation}
\end{lemma}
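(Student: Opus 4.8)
The plan is to use the explicit form of the Jacobian. Since $f(\gen) = \gen + \eta g(\gen)$, we have $\nabla f(\gen) = I + \eta A$, where $I$ is the $\datadim \times \datadim$ identity matrix and $A := \nabla g(\gen)$ is the Jacobi matrix of $g$, which satisfies $\|A\| \le b$ by assumption. The first observation I would record is the elementary identity $\mathrm{tr}(A) = \sum_{j=1}^{\datadim} \partial [g(\gen)]_j/\partial[\gen]_j = \nabla \cdot g(\gen)$, so that \eqref{eq:proof-det} is equivalent to the purely linear-algebraic claim
\[
\left| \det(I + \eta A) - \left(1 + \eta\, \mathrm{tr}(A)\right) \right| \le c\,\eta^2 .
\]
This reduction is the whole conceptual content; what remains is a uniform bound on the determinant's higher-order terms.

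Next I would expand the determinant as a polynomial in $\eta$. The characteristic-polynomial expansion gives $\det(I + \eta A) = \sum_{k=0}^{\datadim} \eta^k e_k(A)$, where $e_k(A)$ is the $k$-th elementary symmetric polynomial in the eigenvalues of $A$ (equivalently, the sum of all $k \times k$ principal minors of $A$). The $k=0$ term is $1$ and the $k=1$ term is $e_1(A) = \mathrm{tr}(A)$, so the quantity to be bounded is exactly $\bigl| \sum_{k=2}^{\datadim} \eta^k e_k(A) \bigr|$.

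To control this remainder I would use that the spectral radius never exceeds the spectral norm, so every eigenvalue $\lambda_i$ of $A$ obeys $|\lambda_i| \le \|A\| \le b$, whence $|e_k(A)| \le \binom{\datadim}{k} b^k$. Because the hypothesis $\eta < 1/b$ forces $\eta b < 1$, we have $(\eta b)^k \le (\eta b)^2$ for every $k \ge 2$, and therefore
\[
\left| \sum_{k=2}^{\datadim} \eta^k e_k(A) \right| \le \eta^2 b^2 \sum_{k=2}^{\datadim} \binom{\datadim}{k} \le \bigl( b^2\, 2^{\datadim} \bigr)\, \eta^2 .
\]
Taking $c = b^2\, 2^{\datadim}$ completes the bound, and the same $c$ is valid uniformly in $\gen$ precisely because $b$ bounds $\|\nabla g(\cdot)\|$ everywhere.

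The only real subtlety is the uniform control of the coefficients beyond the linear term, and this is where both hypotheses $\|\nabla g(\cdot)\| \le b$ and $\eta b < 1$ are genuinely used: the former through the eigenvalue bound and the latter to turn the geometric-type sum into an $O(\eta^2)$ quantity. An equivalent route, which I would fall back on to avoid eigenvalues, is to write $\det(I + \eta A) = \exp\bigl(\mathrm{tr}\,\log(I + \eta A)\bigr)$ (legitimate since $\|\eta A\| < 1$) and Taylor-expand the logarithm and exponential, absorbing every term past the linear one into the remainder using convergence of the two series; this yields the same bound with a constant depending only on $b$ and $\datadim$.
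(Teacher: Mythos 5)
Your proof is correct and follows essentially the same route as the paper's: expand $\det(I+\eta\nabla g)$ as a polynomial in $\eta$, identify the constant and linear coefficients as $1$ and $\nabla\cdot g = \mathrm{tr}(\nabla g)$, and absorb all terms of order $k\ge 2$ into $c\eta^2$ using the uniform bound $\|\nabla g\|\le b$ together with $\eta b<1$. The paper leaves the coefficients as unspecified bounded functions $m_j(\nabla g)$, whereas you identify them as elementary symmetric polynomials of the eigenvalues and extract the explicit constant $c=b^2 2^{\datadim}$; this is a welcome sharpening but not a different argument.
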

\begin{proof}

We note that
\[
\nabla f(\gen) = I + \eta \nabla g(\gen) .
\]
Therefore 
\[
\det (\nabla f(\gen)) = 1 + \eta \nabla \cdot g(\gen)
+ \sum_{j \geq 2} \eta^j m_j(g(\gen) ) ,
\]
where $m_j(g)$ is a function of $\nabla g$. Since $\nabla g$ is bounded, we obtain
the desired formula. 
\end{proof}

\begin{lemma}
Under the assumptions of Lemma~\ref{lem:inverse}, and assume that
Assumption~\refAsumpp\ holds, then there exists a constant $c>0$
such that
\begin{equation}
\int \big|\tilde{p}_*(\gen) - (p_*(\gen) +\eta
p_*(\gen) \nabla \cdot g(\gen) + \eta \nabla p_*(\gen)^\top
g(\gen))  \big| d \gen  \leq c \eta^2 . \label{eq:proof-delta-p}
\end{equation}
and
\begin{equation}
\int  \frac{(\tilde{p}_*(\gen)-p_*(\gen))^2}{{p}_*(\gen)}
d \gen 
\leq c \eta^2 . \label{eq:proof-delta-p2}
\end{equation}
\end{lemma}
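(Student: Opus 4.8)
The plan is to combine the exact density formula \eqref{eq:proof-dens}, namely $\tilde{p}_*(\gen) = p_*(f(\gen))\,|\det(\nabla f(\gen))|$ with $f(\gen)=\gen+\eta g(\gen)$, the determinant expansion \eqref{eq:proof-det}, and the three bounds of Assumption~\ref{asump:p}, applied with displacement $h=\eta a$. This last choice is legitimate because $\eta<h_0/a$ forces $\|\eta g(\gen)\|\le \eta a<h_0$ pointwise. As a preliminary step I would record that \eqref{eq:proof-det} yields $\det(\nabla f(\gen)) = 1 + \eta\,\nabla\cdot g(\gen) + R(\gen)$ with $|R(\gen)|\le c\eta^2$, and that since $\nabla\cdot g$ is the trace of $\nabla g$ and hence bounded (by $\datadim b$), this quantity is positive and uniformly bounded for small $\eta$; this lets me drop the absolute value around the determinant throughout.

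For the first inequality \eqref{eq:proof-delta-p}, I would write $\tilde{p}_*(\gen) = p_*(f(\gen))(1+\eta\,\nabla\cdot g(\gen)) + p_*(f(\gen))R(\gen)$. The heart of the argument is replacing $p_*(f(\gen)) = p_*(\gen+\eta g(\gen))$ by its first-order Taylor expansion $p_*(\gen)+\eta\,\nabla p_*(\gen)^\top g(\gen)$: the first bound of Assumption~\ref{asump:p} with $h=\eta a$ controls the integrated remainder $E(\gen)$, giving $\int |E(\gen)|\,d\gen \le c_0 a^2\eta^2$. Expanding the product and subtracting the target $p_*(\gen)+\eta p_*(\gen)\nabla\cdot g(\gen)+\eta\nabla p_*(\gen)^\top g(\gen)$ leaves exactly three residuals: the genuine second-order cross term $\eta^2(\nabla p_*(\gen)^\top g(\gen))(\nabla\cdot g(\gen))$, which is $O(\eta^2)$ by the third bound $\int\|\nabla p_*\|\le c_0$ together with boundedness of $g$ and $\nabla\cdot g$; the remainder times the bounded factor, $E(\gen)(1+\eta\,\nabla\cdot g(\gen))$, which integrates to $O(\eta^2)$; and the determinant remainder $p_*(f(\gen))R(\gen)$, which integrates to $O(\eta^2)$ once I note $\int p_*(f(\gen))\,d\gen \le 1+O(\eta)$ (obtained by adding and subtracting $p_*(\gen)$ and using Taylor plus the third bound). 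Collecting these gives $\int|\cdots|\,d\gen\le c\eta^2$.

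For the second inequality \eqref{eq:proof-delta-p2}, I would split $\tilde{p}_*(\gen)-p_*(\gen) = [p_*(f(\gen))-p_*(\gen)]\det(\nabla f(\gen)) + p_*(\gen)[\det(\nabla f(\gen))-1]$ and apply $(A+B)^2\le 2A^2+2B^2$. For the first piece $(\det\nabla f)^2$ is uniformly bounded for small $\eta$, so the weighted integral is controlled directly by the second bound of Assumption~\ref{asump:p} with $h=\eta a$, giving $\int |p_*(f(\gen))-p_*(\gen)|^2/p_*(\gen)\,d\gen\le c_0(\eta a)^2$. For the second piece $|\det(\nabla f(\gen))-1|\le c\eta$ pointwise, so $\int p_*(\gen)(\det(\nabla f(\gen))-1)^2\,d\gen\le c^2\eta^2\int p_*(\gen)\,d\gen = c^2\eta^2$. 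Summing the two pieces yields the claimed $O(\eta^2)$ bound.

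The main obstacle I anticipate is the bookkeeping needed to certify that every leftover term is truly $O(\eta^2)$ rather than merely $O(\eta)$: one must pair each residual — the Taylor remainder, the determinant remainder, and the honest cross term — with precisely the right bound of Assumption~\ref{asump:p}, and verify that prefactor integrals such as $\int p_*(f(\gen))\,d\gen$ remain bounded uniformly in $\eta$. The one structural point that makes the whole splitting clean is the positivity and uniform boundedness of $\det(\nabla f(\gen))$ for small $\eta$, which both removes the absolute value in \eqref{eq:proof-dens} and lets $(\det\nabla f)^2$ be treated as a bounded multiplier in the weighted $L^2$ estimate.
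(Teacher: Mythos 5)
Your proposal is correct and follows essentially the same route as the paper: the same triangle-inequality decomposition into a Taylor remainder (first bound of Assumption~\ref{asump:p} with $h=\eta a$), a determinant remainder from \eqref{eq:proof-det}, and an $\eta^2$ cross term for \eqref{eq:proof-delta-p}, and the same two-piece split with $(A+B)^2\le 2A^2+2B^2$ for \eqref{eq:proof-delta-p2}. Your explicit observation that $\det(\nabla f)$ is positive and uniformly bounded for small $\eta$ is a minor tidying of a point the paper leaves implicit.
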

\begin{proof}
Using the algebraic inequality
\begin{align*}
& \big|p_*(f(\gen)) |\det (\nabla
f(\gen))| - (p_*(\gen) +\eta
p_*(\gen) \nabla \cdot g(\gen) + \eta \nabla p_*(\gen)^\top
g(\gen))  \big|\\
\leq&
\big|p_*(f(\gen)) - (p_*(\gen) + \eta \nabla p_*(\gen)^\top
g(\gen)) \big| \;  \big|\det (\nabla f(\gen))\big| \\
& +
\big| (p_*(\gen) + \eta \nabla p_*(\gen)^\top
g(\gen)) \big| \; \big|(1 + \eta \nabla \cdot g(\gen))-|\det
  (\nabla f(\gen))|\big| \\
& + \eta^2 \big| \nabla \cdot g(\gen) \; \nabla p_*(\gen)^\top
g(\gen))  \big| ,
\end{align*}
and using  $\tilde{p}_*(\gen) = p_*(f(\gen)) |\det (\nabla
f(\gen))|$
from \eqref{eq:proof-dens}, we obtain
\begin{align*}
&\int \big|\tilde{p}_*(\gen) - (p_*(\gen) +\eta
p_*(\gen) \nabla \cdot g(\gen) + \eta \nabla p_*(\gen)^\top
g(\gen))  \big| d \gen \\
\leq& 
\underbrace{\int \big|p_*(f(\gen)) - (p_*(\gen) + \eta \nabla p_*(\gen)^\top
g(\gen)) \big| \;  |\det (\nabla f(\gen))| d \gen}_{A_0} \\
& + \underbrace{\int \big| (p_*(\gen) + \eta \nabla p_*(\gen)^\top
g(\gen)) \big| \; \big|(1 + \eta \nabla \cdot g(\gen))-|\det
  (\nabla f(\gen))|\big|  \;   d \gen}_{B_0} \\
& + \eta^2 \underbrace{\int \big| \nabla \cdot g(\gen) \; \nabla p_*(\gen)^\top
g(\gen))  \big| d \gen}_{C_0} \\
\leq & c \eta^2
\end{align*}
for some constant $c>0$, which proves \eqref{eq:proof-delta-p}. The last inequality uses the following facts.
\[
A_0 = \int \big|p_*(f(\gen)) - (p_*(\gen) + \eta \nabla p_*(\gen)^\top
g(\gen)) \big| \; O(1)  d \gen = O(\eta^2) ,
\]
where the first equality follows from the boundedness of $g$ and
$\nabla g$, and the second equality follows from the first inequality of Assumption~\refAsumpp. 
\[
B_0 = \int \big| (p_*(\gen) + \eta \nabla p_*(\gen)^\top
g(\gen)) \big| \; O(\eta^2) \;  d \gen
= O(\eta^2) ,
\]
where the first equality follows from \eqref{eq:proof-det}, and the
second equality follows from the third equality of Assumption~\refAsumpp. 
\[
C_0 =\int \|\nabla p_*(\gen)\| O(1) d \gen = O(1) ,
\]
where the first equality follows from the boundedness of $g$ and
$\nabla g$, and the second equality follows from the third equality of Assumption~\refAsumpp. 

Moreover, using \eqref{eq:proof-dens}, we obtain
\[
|\tilde{p}_*(\gen)-p_*(\gen)|
\leq 
|{p}_*(f(\gen))-p_*(\gen)| \; |\det (\nabla f(\gen))|
+ 
p_*(\gen) ||\det (\nabla f(\gen))|-1| .
\]
Therefore 
\begin{align*}
& \int  \frac{(\tilde{p}_*(\gen)-p_*(\gen))^2}{{p}_*(\gen)}
d \gen \\
\leq & 
2 \int 
      \frac{({p}_*(f(\gen))-p_*(\gen))^2|\det (\nabla f(\gen))|^2 + p_*(\gen)^2 (|\det (\nabla f(\gen))|-1)^2}{{p}_*(\gen)} 
d \gen  \leq c \eta^2 
\end{align*}
for some $c>0$, which proves \eqref{eq:proof-delta-p2}. The second inequality follows from the second
inequality of Assumption~\refAsumpp, and the boundedness of  $|\det
(\nabla f(\gen))|$, and the fact that 
$||\det (\nabla f(\gen))|-1|= O(\eta)$ from \eqref{eq:proof-det}.
\end{proof}

\subsection*{Proof of Theorem~\refThm}
\newcommand{\pp}{\genp} %
In the following integration, with a change of variable from $\gen$ to $\nextgen$ using $\nextgen=f(\gen)$ as
in Lemma~\ref{lem:inverse}, we obtain
\begin{align*}
&\int (\beta p_*(\nextgen)+(1-\beta)\nextgenp(\nextgen)) \ln \frac{\beta
        p_*(\nextgen)+(1-\beta)\nextgenp(\nextgen)}{(1-\beta)p_*(\nextgen)+\beta
   \nextgenp(\nextgen)} d \nextgen \\
=&\int (\beta p_*(f(\gen))+(1-\beta)p'(f(\gen))) \ln \frac{\beta
        p_*(f(\gen))+(1-\beta)p'(f(\gen))}{(1-\beta)p_*(f(\gen))+\beta
   p'(f(\gen))} |\det(\nabla f(x))| d \gen \\
&= \int (\beta \tilde{p}_*(\gen)+(1-\beta)\pp(\gen)) \ln \frac{\beta
  \tilde{p}_*(\gen)+(1-\beta)\pp(\gen)}{(1-\beta)\tilde{p}_*(\gen)+\beta\pp(\gen)}
  d \gen ,
\end{align*}
where the first inequality is basic calculus, and the second
inequality uses \eqref{eq:proof-dens} and \eqref{eq:proof-dens2}.

It follows that
\begin{align*}
L_\beta(\nextgenp) =&\int (\beta p_*(\nextgen)+(1-\beta)\nextgenp(\nextgen)) \ln \frac{\beta
        p_*(\nextgen)+(1-\beta)\nextgenp(\nextgen)}{(1-\beta)p_*(\nextgen)+\beta \nextgenp(\nextgen)} d \nextgen\\             
=& \int (\beta \tilde{p}_*(\gen)+(1-\beta)\pp(\gen)) \ln \frac{\beta
  \tilde{p}_*(\gen)+(1-\beta)\pp(\gen)}{(1-\beta)\tilde{p}_*(\gen)+\beta\pp(\gen)}
  d \gen \\
=& A_1 + B_1 + C_1 ,
\end{align*}
where $A_1$, $B_1$, and $C_1$ are defined as follows. 
\begin{align*}
A_1=&\int (\beta\tilde{p}_*(\gen)+(1-\beta)\pp(\gen)) \ln \frac{\beta
  {p}_*(\gen)+(1-\beta)\pp(\gen)}{(1-\beta){p}_*(\gen)+\beta\pp(\gen)}
  d \gen  \\
=&\int (\beta  {p}_*(\gen)+(1-\beta)\pp(\gen)) \ln \frac{\beta
  {p}_*(\gen)+(1-\beta)\pp(\gen)}{(1-\beta){p}_*(\gen)+\beta\pp(\gen)}
  d \gen \\
& + \eta \beta \int (
p_*(\gen) \nabla \cdot g(\gen) + \nabla p_*(\gen)^\top
g(\gen)) \ln \frac{\beta
  {p}_*(\gen)+(1-\beta)\pp(\gen)}{(1-\beta) {p}_*(\gen)+\beta\pp(\gen)}
  d \gen + O(\eta^2)\\
=&  L_\beta(\pp) 
+ \beta \eta \int 
\nabla \cdot (p_*(\gen) g(\gen)) \; \idealDb(\gen)
  d \gen + O(\eta^2) \\
=&  L_\beta(\pp) 
+ \beta \eta \int 
\nabla \cdot (p_*(\gen) g(\gen)) \; \Db(\gen)
  d \gen + O(\eta\epsilon+\eta^2) \\
=&  L_\beta(\pp) 
-\beta \eta \int 
p_*(\gen) g(\gen)^\top \nabla \Db(\gen)
  d \gen + O(\eta\epsilon+\eta^2) ,
\end{align*}
where the second equality uses \eqref{eq:proof-delta-p} and the fact
that $B<\infty$ in the statement of the theorem. The fourth
equality uses the $\epsilon$-approximation condition 
\eqref{eq:eps} of the assumption.    
The last equality uses
integration by parts and \eqref{eq:proof-int-div}.

\begin{align*}
B_1=&\int (\beta\tilde{p}_*(\gen)+(1-\beta)\pp(\gen)) \ln
      \frac{\beta\tilde{p}_*(\gen)+(1-\beta)\pp(\gen)}
{\beta {p}_*(\gen)+(1-\beta)\pp(\gen)} d \gen \\
  = &  \int (\beta\tilde{p}_*(\gen)+(1-\beta)\pp(\gen)) \ln \left(
      1+ \beta
      \frac{\tilde{p}_*(\gen)-{p}_*(\gen)}{\beta p_*(\gen)+(1-\beta)\pp(\gen)}\right)
      d \gen\\
\leq& \beta \int (\beta\tilde{p}_*(\gen)+(1-\beta)\pp(\gen)) 
      \frac{\tilde{p}_*(\gen)-{p}_*(\gen)}{\beta p_*(\gen)+(1-\beta)\pp(\gen)}
      d \gen \\
=&\beta^2 \int
   \frac{(\tilde{p}_*(\gen)-{p}_*(\gen))^2}{\beta p_*(\gen)+(1-\beta)\pp(\gen)}
      d \gen = O(\eta^2) ,
\end{align*}
where the  inequality uses $\ln (1+ \delta) \leq \delta$. The last
equality uses \eqref{eq:proof-delta-p2}.

\begin{align*}
C_1=&\int (\beta\tilde{p}_*(\gen)+(1-\beta)\pp(\gen)) \ln
      \frac{(1-\beta){p}_*(\gen)+\beta\pp(\gen)}
{(1-\beta)\tilde{p}_*(\gen)+\beta\pp(\gen)} d \gen\\
=& \int (\beta\tilde{p}_*(\gen)+(1-\beta)\pp(\gen)) \ln \left( 1 +(1-\beta)
      \frac{{p}_*(\gen)-\tilde{p}_*(\gen)}{(1-\beta)\tilde{p}_*(\gen)+\beta\pp(\gen)} \right)
d \gen\\
\leq&  (1-\beta) \int (\beta\tilde{p}_*(\gen)+(1-\beta)\pp(\gen)) 
      \frac{{p}_*(\gen)-\tilde{p}_*(\gen)}{(1-\beta)\tilde{p}_*(\gen)+\beta\pp(\gen)}
      d \gen\\
=&  (1-\beta) \int (\beta\tilde{p}_*(\gen)+(1-\beta)\pp(\gen)) 
      \frac{{p}_*(\gen)-\tilde{p}_*(\gen)}{(1-\beta){p}_*(\gen)+\beta\pp(\gen)} 
d \gen\\
& + (1-\beta)^2 \int (\beta\tilde{p}_*(\gen)+(1-\beta)\pp(\gen)) 
      \frac{({p}_*(\gen)-\tilde{p}_*(\gen))^2}{((1-\beta)\tilde{p}_*(\gen)+\beta\pp(\gen))
  ((1-\beta){p}_*(\gen)+\beta\pp(\gen))} 
d \gen \\
\underset{(a)}{=} &  (1-\beta) \int (\beta\tilde{p}_*(\gen)+(1-\beta)\pp(\gen)) 
      \frac{{p}_*(\gen)-\tilde{p}_*(\gen)}{(1-\beta){p}_*(\gen)+\beta\pp(\gen)} 
d \gen + O(\eta^2) \\
\underset{(b)}{=}& -(1-\beta) \int (\beta p_*(\gen)+(1-\beta)\pp(\gen)) 
      \frac{\eta
p_*(\gen) \nabla \cdot g(\gen) + \eta \nabla p_*(\gen)^\top
g(\gen)}{(1-\beta){p}_*(\gen)+\beta\pp(\gen)} 
d \gen
+ O(\eta^2) \\
=& -(1-\beta) \eta \int 
(p_*(\gen) \nabla \cdot g(\gen) + \nabla  p_*(\gen)^\top g(\gen))
\exp(\idealDb(\gen)) d \gen
+ O(\eta^2) \\
\underset{(c)}{=}& -(1-\beta) \eta \int 
(p_*(\gen) \nabla \cdot g(\gen) + \nabla  p_*(\gen)^\top g(\gen))
\exp(\Db(\gen)) d \gen
+ O(\eta\epsilon+\eta^2) \\
=& - (1-\beta) \eta \int (\nabla \cdot (p_*(\gen) g(\gen)))
  \exp(\Db(\gen)) d \gen
+ O(\eta \epsilon+\eta^2) \\
=& \eta (1-\beta) \int p_*(\gen) g(\gen)^\top
\nabla  \exp(\Db(\gen)) d \gen
+ O(\eta \epsilon+\eta^2) ,
\end{align*}
where the first inequality uses $\ln (1+\delta) \leq \delta$. The
equality (a) uses \eqref{eq:proof-delta-p2}. The equality
(b) uses \eqref{eq:proof-delta-p}. The equality (c) uses 
the $\epsilon$-approximation condition 
\eqref{eq:eps}. 
The last equality uses
integration by parts and \eqref{eq:proof-int-div}.

By combining the estimates of $A_1$, $B_1$, and $C_1$, we obtain the desired bound.

\clearpage
\section{Image examples}
\def \noshowBrand {}
\def \noshowTrand {}

We have shown random samples of generated images, and 
here we take a more focused approach.  We generate 1000 images by \xourAlg\ 
trained 
in the settings of Figure \ref{fig:conv-i} 
and show (roughly) the best and worst images among them.  
Similar to the inception score, 
the `goodness' of images are 
measured by the confidence of a classifier, e.g., 
a image that a classifier assigns a high probability of being a ``bedroom''
is considered to be a good bedroom image.   
The worst images are those with the highest entropy values. 
In Figures \ref{fig:m-clsall}--\ref{fig:t-ent}, we compare real images and generated images side by side 
that were chosen by the same procedure from a random sample of 1000 real images or 
1000 generated images (generated from one sequence of random inputs), respectively.  

%

%

%
\ifx \noshowMSsort \undefined
\newcommand{\mrclssz}{10x5}
\newcommand{\mclssz}{30x5}
\newcommand{\mrclsln}{0.25}
\newcommand{\mclsln}{0.75}
\newcommand{\mrclswd}{1.65in}
\newcommand{\mclswd}{5in}
\begin{figure*}
\centering
\begin{subfigure}[b]{\mrclsln\linewidth}
\begin{center}
\includegraphics[width=\mrclswd]{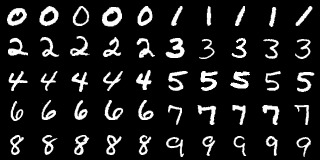}
\end{center}
\vskip -0.125in
\caption{\label{fig:m-clsall-real} \capfont
Real images
}
\end{subfigure}%
\begin{subfigure}[b]{\mclsln\linewidth}
\begin{center}
\includegraphics[width=\mclswd]{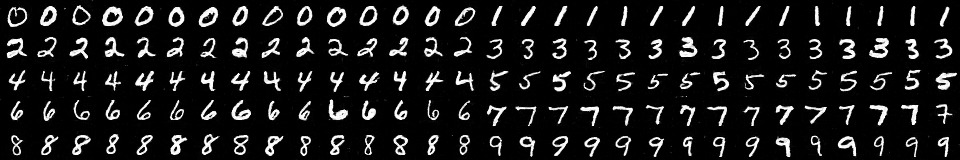}
\end{center}
\vskip -0.125in
\caption{\label{fig:m-clsall-xcfg} \capfont
Generated by \xourAlg\ (convolutional)
}
\end{subfigure}%
\vskip -0.15in
\caption{\label{fig:m-clsall} \capfont
`Best' digits.  
MNIST.  
For each digit, showing images with the highest probabilities 
among 1000 images that were either 
(a) randomly chosen from real data or (b) generated by \xourAlg. 
}
\vskip 0.2in
\renewcommand{\mrclssz}{10x3}
\renewcommand{\mclssz}{30x3}
\begin{subfigure}[b]{\mrclsln\linewidth}
\begin{center}
\includegraphics[width=\mrclswd]{pmnistcr-real-ent-\mrclssz.jpg}
\end{center}
\vskip -0.125in
\caption{\label{fig:m-ent-real} \capfont
Real images
}
\end{subfigure}%
\begin{subfigure}[b]{\mclsln\linewidth}
\begin{center}
\includegraphics[width=\mclswd]{pmnistcr-xcfg-ent-\mclssz.jpg}
\end{center}
\vskip -0.125in
\caption{\label{fig:m-ent-xcfg} \capfont
Generated by \xourAlg\ (convolutional)
}
\end{subfigure}%
\vskip -0.15in
\caption{\label{fig:m-ent} \capfont
`Worst' digits.  MNIST. 
Images with the highest entropy among 1000 images that were either 
(a) randomly chosen from real data or 
(b) generated by \xourAlg.  
Some of the generated images in (b) are hard to tell what digits they are, but 
so are some of the real images in (a).  
}
\newcommand{\srclssz}{8x5}
\newcommand{\sclssz}{24x5}
\newcommand{\srclsln}{0.25}
\newcommand{\sclsln}{0.75}
\newcommand{\srclswd}{1.65in}
\newcommand{\sclswd}{5in}
\vskip 0.2in
\centering
\begin{subfigure}[b]{\srclsln\linewidth}
\begin{center}
\includegraphics[width=\srclswd]{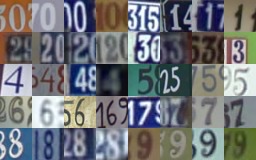}
\end{center}
\vskip -0.125in
\caption{\label{fig:s-clsall-real} \capfont
Real images
}
\end{subfigure}%
\begin{subfigure}[b]{\sclsln\linewidth}
\begin{center}
\includegraphics[width=\sclswd]{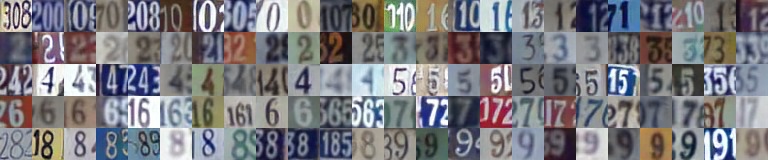}
\end{center}
\vskip -0.125in
\caption{\label{fig:s-clsall-xcfg} \capfont
Generated by \xourAlg\ (convolutional) 
}
\end{subfigure}%
\vskip -0.15in
\caption{\label{fig:s-clsall} \capfont
`Best' digits.  SVHN. 
For each digit, showing images with the highest probabilities 
among 1000 images that were eigher 
(a) randomly chosen from real data or (b) generated by \xourAlg. 
}
\vskip 0.2in
\renewcommand{\srclssz}{8x3}
\renewcommand{\sclssz}{24x3}
\begin{subfigure}[b]{\srclsln\linewidth}
\begin{center}
\includegraphics[width=\srclswd]{psvhncr-real-ent-\srclssz.jpg}
\end{center}
\vskip -0.125in
\caption{\label{fig:s-ent-real} \capfont
Real images
}
\end{subfigure}%
\begin{subfigure}[b]{\sclsln\linewidth}
\begin{center}
\includegraphics[width=\sclswd]{psvhncr-xcfg-ent-\sclssz.jpg}
\end{center}
\vskip -0.125in
\caption{\label{fig:s-ent-xcfg} \capfont
Generated by \xourAlg\ (convolutional) 
}
\end{subfigure}%
\vskip -0.15in
\caption{\label{fig:s-ent} \capfont
`Worst' digits.  SVHN.   
Images with the highest entropy among 1000 images that were either 
(a) randomly chosen from real data or 
(b) generated by \xourAlg.  
Some of the generated images in (b) are hard to tell what digits they are, but 
so are some of the real images in (a). 
}
\end{figure*}
\fi %
\newcommand{\morehwd}{3in}
\newcommand{\morewd}{6.6in}
\newcommand{\moresz}{11x5}
\newcommand{\morersz}{11x1}
\ifx \noshowBrand \undefined 
\begin{figure*}
\centering
\begin{subfigure}[b]{1\linewidth}
\begin{center}
\includegraphics[width=\morewd]{pbrlrcr-real-\morersz.jpg}
\vskip -0.06in
\caption{\label{fig:bm-real-img} 
Real images in the training set.  
}
\end{center}
\end{subfigure}%
\quad
\vskip 0.05in
\begin{subfigure}[b]{1\linewidth}
\begin{center}
\includegraphics[width=\morewd]{pbrlrcr-xcfg-cv-\moresz-\bciik.jpg}
\vskip -0.06in
\caption{\label{fig:bm-xcfg-c-img} 
Generated by \xourAlg\ (4-block ResNet)
}
\end{center}
\end{subfigure}%
\quad
\vskip 0.05in
\begin{subfigure}[b]{1\linewidth}
\begin{center}
\includegraphics[width=\morewd]{pbrlrcr-gp-cv-\moresz-\bciik.jpg}
\vskip -0.06in
\caption{\label{fig:bm-best-c-img} 
Generated by \wgangp\ (4-block ResNet)
}
\end{center}
\end{subfigure}%
\quad
\vskip 0.05in
\begin{subfigure}[b]{0.47\linewidth}
\begin{center}
\includegraphics[width=\morehwd]{pbrlrcr-g0-cv-5x3-\bciik.jpg}
\vskip -0.06in
\caption{\label{fig:bm-c-g0-img} 
Generated by \gano\ (4-block ResNet)
}
\end{center}
\end{subfigure}%
\begin{subfigure}[b]{0.47\linewidth}
\begin{center}
\includegraphics[width=\morehwd]{pbrlrcr-g1-cv-5x3-\bciik.jpg}
\vskip -0.06in
\caption{\label{fig:bm-c-g1-img} 
Generated by \gand\ (4-block ResNet)
}
\end{center}
\end{subfigure}%
\vskip -0.125in
\caption{\label{fig:b-moremore} \capfont
Example images.  
LSUN bedrooms \& living rooms.
Randomly chosen and sorted by predicted classes. 
}
\end{figure*}
\fi %
\newcommand{\clssz}{9x4}
\newcommand{\rclssz}{2x4}

\newcommand{\clswd}{5.4in} %
\newcommand{\rclswd}{1.2in} %
\newcommand{\rclsln}{0.19}
\newcommand{\clsln}{0.81}

\ifx \noshowBsort \undefined
\clearpage
\begin{figure*}
\centering
\begin{subfigure}[b]{\rclsln\linewidth}
\begin{center}
\includegraphics[width=\rclswd]{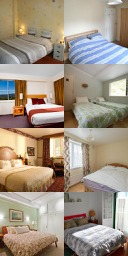}
\vskip -0.06in
\caption{\label{fig:b-real-cls0} 
Real images.
}
\end{center}
\end{subfigure}%
\begin{subfigure}[b]{\clsln\linewidth}
\begin{center}
\includegraphics[width=\clswd]{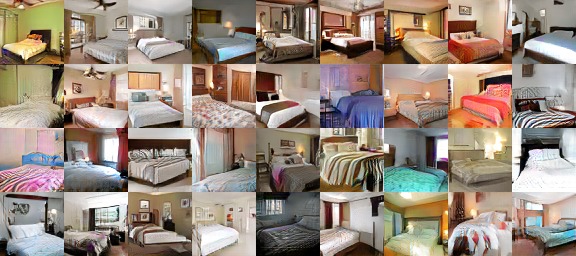}
\vskip -0.06in
\caption{\label{fig:b-xcfg-cls0} 
Generated by \xourAlg\ (4-block ResNet)
}
\end{center}
\end{subfigure}%
\vskip -0.2in
\caption{\label{fig:b-cls0} \capfont
Bedrooms `best' among 1000 (LSUN BR+LR).  Predicted by a classifier to be ``bedroom''  
with the highest probabilities among 1000 images that were either 
(a) randomly chosen from real data or (b) generated by \xourAlg. 
}
\end{figure*}
\begin{figure*}
\begin{subfigure}[b]{\rclsln\linewidth}
\begin{center}
\includegraphics[width=\rclswd]{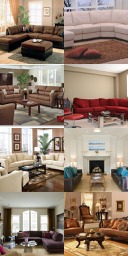}
\vskip -0.06in
\caption{\label{fig:b-real-cls1} 
Real images. 
}
\end{center}
\end{subfigure}%
\begin{subfigure}[b]{\clsln\linewidth}
\begin{center}
\includegraphics[width=\clswd]{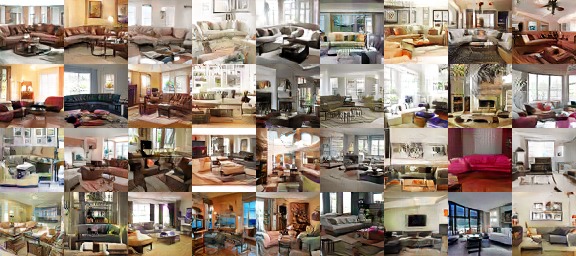}
\vskip -0.06in
\caption{\label{fig:b-xcfg-cls1} 
Generated by \xourAlg\ (4-block ResNet)
}
\end{center}
\end{subfigure}%
\vskip -0.2in
\caption{\label{fig:b-cls1} \capfont
Living rooms `best' among 1000 (LSUN BR+LR).  Predicted by a classifier to be ``living room''  
with the highest probabilities among 1000 images that were either 
(a) randomly chosen from real data or (b) generated by \xourAlg. 
}
\end{figure*}
\begin{figure*}
\begin{subfigure}[b]{\rclsln\linewidth}
\begin{center}
\includegraphics[width=\rclswd]{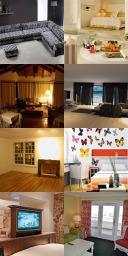}
\vskip -0.06in
\caption{\label{fig:b-real-ent} 
Real images. 
}
\end{center}
\end{subfigure}%
\begin{subfigure}[b]{\clsln\linewidth}
\begin{center}
\includegraphics[width=\clswd]{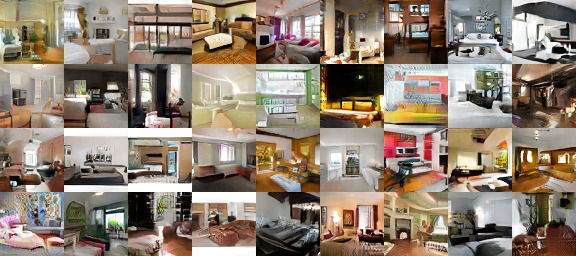}
\vskip -0.06in
\caption{\label{fig:b-xcfg-ent} 
Generated by \xourAlg\ (4-block ResNet)
}
\end{center}
\end{subfigure}%
\vskip -0.125in
\caption{\label{fig:b-ent} \capfont
Bedrooms/living rooms `worst' among 1000 (LSUN BR+LR).  
Images with the highest entropy among 1000 images that were either 
(a) randomly chosen from real data or (b) generated by \xourAlg. 
The generated images in (b) could be either of relatively low quality 
or depicting hard-to-tell rooms as the real images in (a) do.  
}
\end{figure*}
\fi %
\ifx \noshowTrand \undefined
\begin{figure*}
\centering
\begin{subfigure}[b]{1\linewidth}
\begin{center}
\includegraphics[width=\morewd]{ptwbgcr-real-\morersz.jpg}
\vskip -0.06in
\caption{\label{fig:tm-real-img} 
Real images in the training set.  
}
\end{center}
\end{subfigure}%
\quad
\vskip 0.05in
\begin{subfigure}[b]{1\linewidth}
\begin{center}
\includegraphics[width=\morewd]{ptwbgcr-xcfg-cv-\moresz-\bciik.jpg}
\vskip -0.06in
\caption{\label{fig:tm-xcfg-c-img} 
Generated by \xourAlg\ (4-block ResNet)
}
\end{center}
\end{subfigure}%
\quad
\vskip 0.05in
\begin{subfigure}[b]{1\linewidth}
\begin{center}
\includegraphics[width=\morewd]{ptwbgcr-gp-cv-\moresz-\bciik.jpg}
\vskip -0.06in
\caption{\label{fig:tm-best-c-img} 
Generated by \wgangp\ (4-block ResNet)
}
\end{center}
\end{subfigure}%
\quad
\vskip 0.05in
\begin{subfigure}[b]{0.47\linewidth}
\begin{center}
\includegraphics[width=\morehwd]{ptwbgcr-g0-cv-5x3-\bciik.jpg}
\vskip -0.06in
\caption{\label{fig:tm-best-c-img} 
Generated by \gano\ (4-block ResNet)
}
\end{center}
\end{subfigure}%
\begin{subfigure}[b]{0.47\linewidth}
\begin{center}
\includegraphics[width=\morehwd]{ptwbgcr-g1-cv-5x3-\bciik.jpg}
\vskip -0.06in
\caption{\label{fig:tm-best-c-img} 
Generated by \gand\ (4-block ResNet)
}
\end{center}
\end{subfigure}%
\vskip -0.125in
\caption{\label{fig:t-moremore} \capfont
Example images.  
LSUN towers \& bridges.
Randomly chosen and sorted by predicted classes. 
}
\end{figure*}
\fi %

\ifx \noshowTsort \undefined
\begin{figure*}
\centering
\begin{subfigure}[b]{\rclsln\linewidth}
\begin{center}
\includegraphics[width=\rclswd]{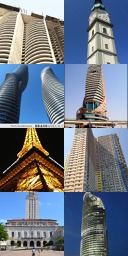}
\vskip -0.06in
\caption{\label{fig:t-real-cls0} 
Real images.
}
\end{center}
\end{subfigure}%
\begin{subfigure}[b]{\clsln\linewidth}
\begin{center}
\includegraphics[width=\clswd]{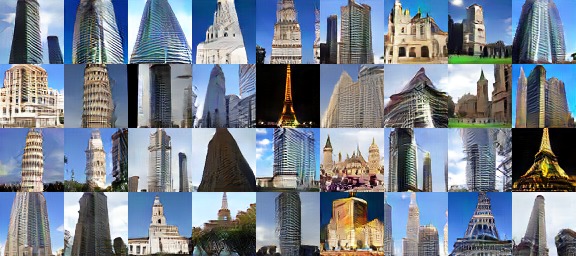}
\vskip -0.06in
\caption{\label{fig:t-xcfg-cls0} 
Generated by \xourAlg\ (4-block ResNet)
}
\end{center}
\end{subfigure}%
\vskip -0.18in
\caption{\label{fig:t-cls0} \capfont
Towers `best' among 1000 (LSUN T+B).  Predicted by a classifier to be ``tower''  
with the highest probabilities among 1000 images that were either 
(a) randomly chosen from real data or (b) generated by \xourAlg. 
}
\end{figure*}
\begin{figure*}
\begin{subfigure}[b]{\rclsln\linewidth}
\begin{center}
\includegraphics[width=\rclswd]{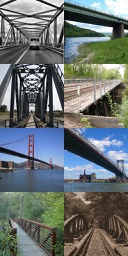}
\vskip -0.06in
\caption{\label{fig:t-real-cls1} 
Real images. 
}
\end{center}
\end{subfigure}%
\begin{subfigure}[b]{\clsln\linewidth}
\begin{center}
\includegraphics[width=\clswd]{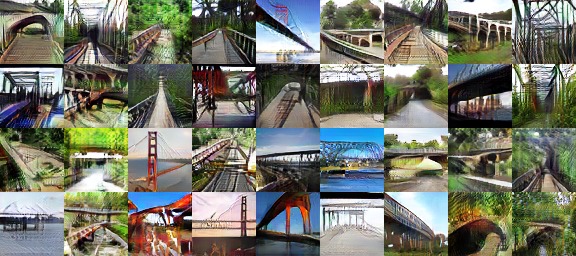}
\vskip -0.06in
\caption{\label{fig:t-xcfg-cls1} 
Generated by \xourAlg\ (4-block ResNet)
}
\end{center}
\end{subfigure}%
\vskip -0.18in
\caption{\label{fig:t-cls1} \capfont
Bridges `best' among 1000 (LSUN T+B).  Predicted by a classifier to be ``bridge''  
with the highest probabilities among 1000 images that were either 
(a) randomly chosen from real data or (b) generated by \xourAlg. 
}
\end{figure*}
\begin{figure*}
\begin{subfigure}[b]{\rclsln\linewidth}
\begin{center}
\includegraphics[width=\rclswd]{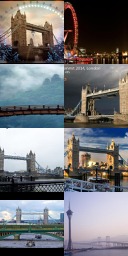}
\vskip -0.06in
\caption{\label{fig:t-real-ent} 
Real images. 
}
\end{center}
\end{subfigure}%
\begin{subfigure}[b]{\clsln\linewidth}
\begin{center}
\includegraphics[width=\clswd]{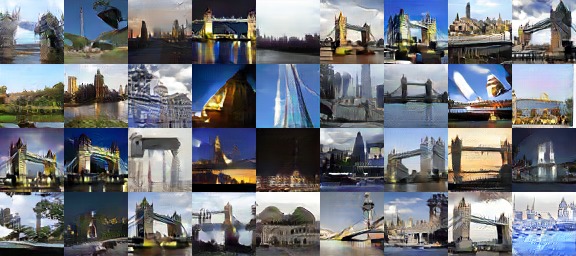}
\vskip -0.06in
\caption{\label{fig:t-xcfg-ent} 
Generated by \xourAlg\ (4-block ResNet)
}
\end{center}
\end{subfigure}%
\vskip -0.18in
\caption{\label{fig:t-ent} \capfont
Towers/bridges `worst' among 1000 (LSUN T+B).  
Images with the highest entropy among 1000 images that were either 
(a) randomly chosen from real data or (b) generated by \xourAlg. 
The generated images in (b) could be either of relatively low quality 
or depicting hard-to-tell objects as the real images in (a) do. 
}
\end{figure*}
\fi %
\clearpage

\section{Details of experimental setup}
\subsection{Network architectures} 

The network definitions below include batch normalization layers.  
Note that to experiment with \wgangp, 
we explored the options with the batch normalization layers being removed or partially removed, 
as described in Section \refsecsetup.  

\label{app:net}
\subsubsection{MNIST and SVHN in Figure \reffigconv} 

The convolutional architectures used for MNIST and SVHN are an extension of DCGAN, 
inserting 1$\times$1 convolution layers.   

\begin{center}
\begin{tabular}{|c|l|c|c|l|}
\multicolumn{2}{c}{Approximator/Generator}  &\multicolumn{1}{c}{}&   \multicolumn{2}{c}{Discriminator}\\
\cline{1-2} \cline{4-5}
1 & Projection                              && 1&Convolution, 5$\times$5, stride 2 \\
\cline{1-2}
2 & ReLU                                    && 2&LeakyReLU  \\
                                            \cline{4-5}
3 & Transposed conv, 5$\times$5, stride 2   && 3&Convolution, 5$\times$5, stride 2 \\
4 & BatchNorm                               && 4&BatchNorm  \\
5 & ReLU                                    && 5&LeakyReLU  \\
6 & Convolution, 1$\times$1, stride 1       && 6&Convolution, 1$\times$1, stride 1 \\
7 & BatchNorm                               && 7&BatchNorm \\
\cline{1-2} 
8 & ReLU                                    && 8&LeakyReLU \\
                                            \cline{4-5}
9 & Transposed conv, 5$\times$5, stride 2   && 9&Flatten \\
10 & $\tanh$                                &&10&Linear \\
\cline{1-2}                                 \cline{4-5}
\multicolumn{2}{l}{Repeat 2--7 twice.} &\multicolumn{1}{c}{}& \multicolumn{2}{l}{Repeat 3--8 twice.} \\
\end{tabular}
\end{center}

For the discriminator, start with 32 (MNIST) or 64 (SVHN) feature maps and double it at downsampling 
except for the first downsampling. 
For the approximator/generator, start with 128 (MNIST) or 256 (SVHN) and halve it 
at upsampling except for the last upsampling. 
\subsubsection{LSUN in Figure \reffigconv}
The convolutional architecture used for LSUN is a simplification of 
a residual network found at \url{https://github.com/igul222/improved_wgan_training}, 
reducing the number of batch normalization layers for speedup, 
removing some irregularity, 
and so forth. 
Both the approximator/generator and discriminator are 
a residual network with four convolution blocks.  
\begin{center}
\begin{tabular}{|c|l|c|c|l|}
\multicolumn{2}{c}{Approximator/Generator}  &\multicolumn{1}{c}{}&   \multicolumn{2}{c}{Discriminator}\\
\cline{1-2}                           \cline{4-5}
1& Projection                         && 1& ReLU (omitted in the 1st block) \\
\cline{1-2}
2& ReLU                               && 2& Convolution, 3$\times$3, stride 1 \\
3& Upsampling ($\times$2), nearest    && 3& ReLU \\
4& Convolution, 3$\times$3, stride 1  && 4& Convolution, 3$\times$3, stride 1 \\
5& ReLU                               && 5& BatchNorm \\
6& Convolution, 3$\times$3, stride 1  && 6& Downsampling (/2), mean\\
                                      \cline{4-5}
7& BatchNorm                          && 7& ReLU      \\
  \cline{1-2}
8& ReLU                               && 8& Flatten \\                 
9& Convolution, 3$\times$3, stride 1  &&  9& Linear \\
                                      \cline{4-5}
10& $\tanh$                           &\multicolumn{3}{c}{Repeat 1--6 four times.}\\
\cline{1-2}
\multicolumn{3}{c}{Repeat 2--7 four times.}\\
\end{tabular}
\end{center}
2--7 of the approximator/generator and 1--6 of the discriminator are 
convolution blocks with a shortcut connecting the beginning to the end. 
In the approximator/generator, 
the numbers of feature maps are 512 (produced by the projection layer) 
and 256, 256, 128, 128, 64, 64, 64, and 64 (produced by the convolution layers). 
In the discriminator, 
the numbers of feature maps produced by the convolution layers are 
64, 64, 64, 128, 128, 256, 256, and 512.  

\subsection{Details of experimental settings for \xourAlg} 
The network weights were initialized by the Gaussian with mean 0 and standard deviation 0.01.  

The rmsprop learning rate for \xourAlg\ (for updating the discriminator and the approximator) 
was fixed to 0.0001 across all the datasets when the approximator was fully-connected.  
Although 0.0001 worked well also for the convolutional approximator cases, 
these cases turned out to tolerate and benefit from a more aggressive learning rate 
resulting in faster training; hence, it was fixed to 0.00025 across all the datasets. 
Additionally, if we keep training long enough, the discriminator may eventually overfit 
as also noted on \wgangp\ in \cite{WGANgp17}.  
It may be useful to reduce the learning rate (e.g., by multiplying 0.1) 
towards the end of training if the onset of discriminator overfit needs to be delayed. 

As shown in Table \ref{tab:meta}, the discriminator update frequency $\dupd$ was fixed to 1.  
However, it is worth mentioning that stable training has been observed with a relatively 
wide range of $\dupd$ including $\dupd$=25.  The choice of reporting the results with 
$\dupd$=1 was due to its pragmatic advantage -- training tends to be faster with a smaller 
$\dupd$ as it leads to more frequent updates of a generator.  

To choose $\eta$ used for generator update 
$G_{t+1}(\inpt) = G_t(\inpt) + \eta \nabla D(G_t(\inpt))$, 
we tried some of $\{ 0.1,0.25,0.5,1,2.5\}$ (not all as we tried only promising ones) 
for each configuration, 
following the meta-parameter selection protocol described in Section \ref{sec:setup}.  
Typically, multiple values were found to work well, 
and the table below shows the chosen values.  
\begin{center}
\begin{tabular}{|l|c|c|c|c|}
\hline
                     & MNIST & SVHN & BR+LR & T+B \\
\hline                   
 convolutional (Fig.\ref{fig:conv-i})       &   1   & 0.25 &   1   &  1  \\
 conv. no batch norm (Fig.\ref{fig:conv-i}) &  --   & 0.5  &  2.5  & --  \\
 fully-connected (Fig.\ref{fig:fc-i})       &  0.1  & 0.25 &  0.5  & 0.5 \\ 
\hline
\end{tabular}
\end{center} 
In general, similar to the SGD learning rate, 
a larger $\eta$ leads to faster training, but a too large value would break training.  
A too small value should be avoided since stable training requires a generator to make sufficient progress
before each approximator update.  

\fi
\end{document}